\documentclass{article}

\usepackage{microtype}
\usepackage{graphicx}
\usepackage{subcaption}
\usepackage{booktabs} % for professional tables
\usepackage{amsmath}
\usepackage{multirow}

\usepackage{hyperref}

\usepackage[accepted]{icml2026}

\usepackage{amsmath}
\usepackage{amssymb}
\usepackage{mathtools}
\usepackage{amsthm}
\usepackage[capitalize,noabbrev]{cleveref}

\theoremstyle{plain}
\newtheorem{theorem}{Theorem}[section]
\newtheorem{proposition}[theorem]{Proposition}
\newtheorem{lemma}[theorem]{Lemma}

\theoremstyle{definition}

\theoremstyle{remark}

\newcommand{\re}[1]{\textcolor{black}{#1}}

\usepackage[textsize=tiny]{todonotes}

\icmltitlerunning{GASS for Disentangled Diversity Enhancement in T2I Generation}

\begin{document}

\twocolumn[
  \icmltitle{GASS: Geometry-Aware Spherical Sampling for \\ Disentangled Diversity Enhancement in Text-to-Image Generation}

  % It is OKAY to include author information, even for blind submissions: the
  % style file will automatically remove it for you unless you've provided
  % the [accepted] option to the icml2026 package.

  % List of affiliations: The first argument should be a (short) identifier you
  % will use later to specify author affiliations Academic affiliations
  % should list Department, University, City, Region, Country Industry
  % affiliations should list Company, City, Region, Country

  % You can specify symbols, otherwise they are numbered in order. Ideally, you
  % should not use this facility. Affiliations will be numbered in order of
  % appearance and this is the preferred way.
  \icmlsetsymbol{equal}{*}

  \begin{icmlauthorlist}
    \icmlauthor{Ye Zhu}{lix,pu}
    \icmlauthor{Kaleb S. Newman}{pu}
    \icmlauthor{Johannes F. Lutzeyer}{lix} \\
    \icmlauthor{Adriana Romero-Soriano}{meta,mcg,mila,cifar}
    \icmlauthor{Michal Drozdzal}{meta}
    \icmlauthor{Olga Russakovsky}{pu}
    % \icmlauthor{Firstname7 Lastname7}{comp}
    %\icmlauthor{}{sch}
    % \icmlauthor{Firstname8 Lastname8}{sch}
    % \icmlauthor{Firstname8 Lastname8}{yyy,comp}
    %\icmlauthor{}{sch}
    %\icmlauthor{}{sch}
  \end{icmlauthorlist}

  \icmlaffiliation{lix}{Laboratoire d'Informatique (LIX), CNRS, École Polytechnique, IPP, France}
  \icmlaffiliation{pu}{Department of Computer Science, Princeton University, USA}
  \icmlaffiliation{meta}{FAIR at Meta - Montreal, Canada}
  \icmlaffiliation{mcg}{McGill University, Canada}
  \icmlaffiliation{mila}{Mila, Quebec AI Institute, Canada}
  \icmlaffiliation{cifar}{Canada CIFAR AI chair}

  \icmlcorrespondingauthor{Ye Zhu}{ye.zhu@polytechnique.edu}
  % \icmlcorrespondingauthor{Firstname2 Lastname2}{first2.last2@www.uk}

  % You may provide any keywords that you find helpful for describing your
  % paper; these are used to populate the "keywords" metadata in the PDF but
  % will not be shown in the document
  \icmlkeywords{Machine Learning, ICML}

  \vskip 0.3in
]

% this must go after the closing bracket ] following \twocolumn[ ...

% This command actually creates the footnote in the first column listing the
% affiliations and the copyright notice. The command takes one argument, which
% is text to display at the start of the footnote. The \icmlEqualContribution
% command is standard text for equal contribution. Remove it (just {}) if you
% do not need this facility.

% Use ONE of the following lines. DO NOT remove the command.
% If you have no special notice, KEEP empty braces:
\printAffiliationsAndNotice{}  % no special notice (required even if empty)
% Or, if applicable, use the standard equal contribution text:
% \printAffiliationsAndNotice{\icmlEqualContribution}

\begin{abstract}

Despite high semantic alignment, modern text-to-image (T2I) generative models still struggle to synthesize diverse images from a given prompt. 
% This lack of diversity not only restricts user choice, but also risks amplifying societal biases. 
In this work, we enhance the T2I diversity through a geometric lens. Unlike most existing methods that rely primarily on entropy-based guidance to increase sample dissimilarity, we introduce \textbf{G}eometry-\textbf{A}ware \textbf{S}pherical \textbf{S}ampling (\emph{GASS}) to enhance diversity by explicitly controlling both prompt-dependent and prompt-independent sources of variation. Specifically, we decompose the diversity measure in CLIP embeddings using two orthogonal directions: the text embedding, which captures semantic variation related to the prompt, and an identified orthogonal direction that captures prompt-independent variation (e.g., backgrounds). Based on this decomposition, \emph{GASS} increases the geometric projection spread of generated image embeddings along both axes and guides the T2I sampling process via expanded predictions along the generation trajectory. Our experiments on different frozen T2I backbones (U-Net and DiT, diffusion and flow) and benchmarks demonstrate the effectiveness of disentangled diversity enhancement with minimal impact on image fidelity and semantic alignment~\footnote{Code is available at \href{https://github.com/L-YeZhu/GASS_T2I}{https://github.com/L-YeZhu/GASS\_T2I}.}.
% Code is available at \href{}{}

% Despite high semantic alignment, modern text-to-image (T2I) generative models still struggle to synthesize diverse images from a single text prompt. This limitation not only restricts user choice and creative control but also risks amplifying societal biases by reinforcing narrow visual stereotypes. In this work, we propose enhancing output diversity in T2I generation through a geometric lens. Unlike most existing methods that rely primarily on entropy-based guidance to increase sample dissimilarity, we introduce Geometry-Aware Spherical Sampling (GASS), which amplifies diversity by explicitly controlling both prompt-dependent and prompt-independent sources of variation. Specifically, we decompose the diversity space in CLIP embeddings using two orthogonal directions: the text embedding, which captures semantic variation related to the prompt, and an identified orthogonal direction that captures prompt-independent variation, such as backgrounds. Based on this decomposition, GASS increases the geometric spread of generated image embeddings along both axes and guides the T2I sampling process via perturbed predictions throughout the generation trajectory. Our experiments on different T2I backbones (U-Net and DiT, diffusion and flow) and benchmarks demonstrate not only the effectiveness of diversity enhancement but also introduce an additional layer of controlability in the diversity amplification task through our geometrical disentanglement.

\end{abstract}

\vspace{-0.15in}
\section{Introduction}
\label{sec:intro}

\begin{figure}
    \centering
    \includegraphics[width=1.0\linewidth]{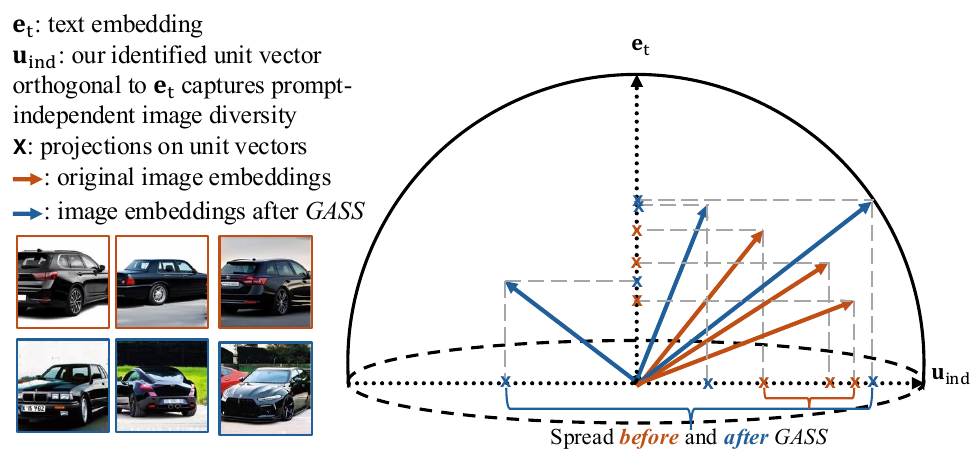}
    \caption{\textbf{Illustration of our geometric decomposition of sample diversity and \emph{
    GASS} enhancement method in CLIP space.} We decompose the diversity of generated image batches from T2I models in the CLIP hypersphere along two orthogonal axes: text embedding $\mathbf{e}_t$ (i.e., prompt-dependent) and our identified direction $\mathbf{u}_{\text{ind}}$ (i.e., prompt-independent). Our \emph{GASS} method explicitly expands the geometric spread along both axes, thus enhancing the diversity of generated images across prompt-dependent content (e.g., object viewing angles) and prompt-independent visual attributes (e.g., backgrounds).}
    \label{fig1:idea}
    % \vspace{-0.15in}
\end{figure}

%Para 1: Diversity problem in T2I
Text-to-Image (T2I) generation has gained tremendous popularity and research attention in recent years, driven by advances in model design, including diffusion-based~\cite{ho2020denoising,song2021score} and flow-based~\cite{papamakarios2021normalizing,liu2023rectifiedflow,lipmanflow} architectures, as well as successful scaling on large-scale text-image datasets~\cite{rombach2022stabled,esser2024sd3}. However, despite significant improvements in image fidelity and semantic alignment with text conditions, these models still tend to generate images with limited diversity given a fixed text prompt. 
This lack of diversity creates practical and societal challenges. It restricts not only the user choice and creative control in generative design workflows, but also risks amplifying societal biases by reinforcing narrow visual stereotypes related to attributes such as gender and ethnicity~\cite{naik2023social,wan2024survey}. To address this, we seek to enhance the diversity of generated images within the T2I context under a fixed text prompt in this work.

%Limitation in current diversity measure and enhancement techniques
Prior work often investigates the diversity challenge through the design of evaluation and enhancement methods. 
From the evaluation perspective, diversity is typically assessed either in a reference-based manner by comparing generated samples against real images using distributional or coverage metrics~\cite{kynkaanniemi2019improvedprecision,naeem2020reliable}, or in a reference-free manner through quantifying entropy directly in the embedding space~\cite{friedmanvendi2023,ospanov2025scendi}.
As for enhancement techniques, recent methods typically improve diversity by maximizing sample dissimilarity within the batch through perturbations to intermediate latents or conditioning signals~\cite{sadatcads,pg2024,kirchhof2025spell,berrada2025entropy}, aligning with metrics like Vendi Score (VS)~\cite{friedmanvendi2023} that measure embedding entropy.
However, these entropy-maximization approaches overlook the multi-sourced nature of T2I diversity. For instance, given a prompt like \emph{``A black colored car’’}, outputs vary across prompt-dependent dimensions (e.g., viewing angles, car models) and prompt-independent dimensions (e.g., backgrounds, lighting). While recent Scendi scores attempt decomposition via Schur complement entropy~\cite{ospanov2025scendi,jalalisparke}, their reliance on text-image covariance matrices limits applicability to scenarios with equal numbers of prompts and images.
Unlike entropy-based approaches, we address this challenge by disentangling and quantifying these sources of variation from a geometric perspective.

We consider the scenario of generating multiple images from a single prompt, and propose to analyze their diversity within the shared CLIP embedding hypersphere~\cite{radford2021clip}. 
As illustrated in Fig.~\ref{fig1:idea}, we decompose the variation of generated image embeddings $\mathbf{V} = \{\mathbf{e}_i\}_{i=1}^B$ relative to the text embedding $\mathbf{e}_t$ into two orthogonal components: the \textbf{prompt-dependent variation} captured by their projections onto $\mathbf{e}_t$, which represent semantic changes aligned with the text condition; and the \textbf{prompt-independent variation} captured by our identified orthogonal complement $\mathbf{u}_{\text{ind}}$, featuring visual attributes like backgrounds and styles.
We further propose to quantify the diversity of the image batch by \emph{summing the respective projection spreads} along each direction. Empirically, we validate this measurement on ImageNet~\cite{deng2009imagenet,russakovsky2015imagenet} by comparing the geometric spread of real images against synthetic generations from T2I models, as detailed in Sec.~\ref{sec:measure}.

Building upon this geometric analysis, we propose the \textbf{G}eometry-\textbf{A}ware \textbf{S}pherical \textbf{S}ampling (\emph{GASS}) method to enhance the generated sample diversity within the T2I setting given a fixed text prompt. Specifically, \emph{GASS} explicitly expands the projection spread of generated embeddings along both orthogonal directions, as illustrated in Fig.~\ref{fig1:idea}. We then update the images through gradient-based optimization using the frozen CLIP image encoder. These optimized images are used to replace predicted images within the T2I sampling process, thus steering the generation trajectory toward greater geometric coverage while preserving semantic fidelity. Extensive experiments across diverse T2I backbones (U-Net~\cite{ronneberger2015unet} and DiT architectures~\cite{peebles2023dit}, diffusion~\cite{rombach2022stabled} and flow paradigms~\cite{esser2024sd3}) and benchmarks (ImageNet~\cite{russakovsky2015imagenet} and DrawBench~\cite{saharia2022photorealistic}) demonstrate that \emph{GASS} achieves superior diversity gains compared to state-of-the-art enhancement techniques while maintaining quality and consistency.
Notably, to the best of our knowledge, \emph{GASS} is the first sampling-based method to explicitly introduce \textbf{meaningful background diversity without modifying text prompts}, as shown in the \textbf{non-cherry-picked results} from Fig.~\ref{fig:qualitative}. This suggests that our geometric formulation enables more comprehensive exploration of the residual space, which has remained largely unexploited in prior work.
% \re{Code is available at \href{https://github.com/L-YeZhu/GASS_T2I}{https://github.com/L-YeZhu/GASS\_T2I}.}

% Based on this decomposition, we further propose our \textbf{G}eometry-\textbf{A}ware \textbf{S}pherical \textbf{S}ampling (\emph{GASS}) method to explicitly expand the  

Our contributions can be summarized as follows:
\begin{itemize}
    \item We introduce a geometric framework to disentangle and quantify prompt-dependent and prompt-independent diversity sources within the CLIP hypersphere for T2I generation.
    \item We propose \emph{GASS}, a geometry-aware spherical sampling method that enhances diversity by explicitly expanding the geometric spread of generated embeddings along orthogonal directions.
    \item Extensive experiments across diverse T2I backbones and benchmarks demonstrate the effectiveness of \emph{GASS} for disentangled diversity enhancement.
\end{itemize}

\paragraph{Conflict of Interest Disclosure.}
\re{The authors declare that there are no conflicts of interest.}

%Contributions: Spherical spread and enhancement techniques
% The contribution of our work can be summarized as follows:
% \begin{itemize}
    % \item We introduce a geometric disentanglement method and conduct analysis in the text-visual mutual latent space (e.g. CLIP) to identify and quantify different sources of diversity for T2I generation.
    % \item We propose \emph{GASS}, a geometry-aware spherical sampling method, to enhance the diversity of the generated image set given a text prompt by increasing their spherical spread in the latent space.
    % \item We perform extensive experiments across different T2I models and benchmarks to demonstrate the effectiveness of our proposed diversity measure and sampling method. 
% \end{itemize}

\section{Related Work}
\label{sec:related}

\subsection{Diversity Evaluation and Measurement in T2I}
\label{subsec:div_measurement}
Beyond commonly adopted quality and alignment assessment through scores such as FID~\cite{heusel2017fid} and CLIPScore~\cite{hessel2021clipscore}, sample diversity in T2I remains a critical yet challenging axis of evaluation. Existing assessments can be broadly categorized into reference-based metrics~\cite{kynkaanniemi2019improvedprecision,naeem2020density}, which rely on ground-truth data distributions, and reference-free metrics that assess intrinsic sample variety~\cite{friedmanvendi2023,pasarkar2024cousins,jalali2024conditional,ospanovvendi,ospanov2024towards}.
Specifically, Precision and Recall~\cite{kynkaanniemi2019improvedprecision}, Density and Coverage~\cite{naeem2020density} are two classic score pairs that simultaneously capture the sample quality and diversity by measuring the distributional overlap between generated samples and real reference data. 
Image Retrieval Score (IRS)~\cite{Dombrowski_2025_CVPR} is a recently introduced diversity score defined through a retrieval task.
In contrast, reference-free metrics assess diversity solely from generated samples. For instance, VS and its recent variants~\cite{friedmanvendi2023,pasarkar2024cousins,jalali2024conditional,ospanov2024towards,ospanov2025scendi} quantify intrinsic diversity via the entropy of the sample similarity matrix.
Our work also looks into the diversity in a reference-free manner, by decomposing the batch of image CLIP embeddings into prompt-dependent and prompt-independent components through geometrically grounded orthogonal projection on the high-dimensional unit sphere. 

%%%%%% Arxived previous version%%%%%%
% Evaluating text-to-image (T2I) generation typically involves assessing visual quality and semantic alignment, commonly measured by scores like FID~\cite{heusel2017fid} and CLIPScore~\cite{hessel2021clipscore}, respectively. Recent metrics, such as ImageReward~\cite{xu2023imagereward} and PickScore~\cite{kirstain2023pickscore}, further align evaluation with human preference. 
% More recently, the Scendi Score~\cite{ospanov2025scendi} is proposed to measure diversity by disentangling textual content from visual variety by computing the Schur Complement entropy of the paired image-text kernel covariance matrix.
% However, Scendi Score targets $N$ distinct text-image pairs and degenerates to VS in fixed-prompt settings under a non-invertible covariance matrix. 
% In contrast, our work explicitly addresses sample diversity given a fixed prompt by decomposing image CLIP embeddings into prompt-dependent and prompt-independent components via geometrically grounded orthogonal projection in the high-dimensional unit sphere. 

% Existing evaluation of sample diversity in image generation can be principally categorized into those requiring reference images or not .  

\subsection{Methods for Enhanced T2I Diversity}
\label{subsec:div_method}
Many recent research works aim to enhance generation diversity by introducing additional guidelines in various settings~\cite{ho2022cfg,sadatcads,miao2024training,cideron2024diversity,pg2024,askari2024improving,kirchhof2025spell,dall2025increasing,berrada2025entropy,kynkaanniemi2024ig}. These efforts can be broadly categorized into post-training-based and inference-time sampling-based approaches.
While some works~\cite{miao2024training,cideron2024diversity} employ RL-based reward functions during training, a larger body of work~\cite{ho2022cfg,sadatcads,kirchhof2025spell,kynkaanniemi2024ig,pg2024} focuses on inference-time guidance. Among the latter, a standard paradigm is to explicitly maximize the entropy of the generated samples~\cite{berrada2025entropy,pg2024,jalalisparke,askari2024improving}, directly targeting the information-theoretic definitions of diversity highlighted in Sec.~\ref{subsec:div_measurement}.
While the majority of them share a similar high-level objective of maximizing sample dissimilarity, they often lack granular control over the diversity nature. In contrast, our approach introduces \emph{a layer of controllability}, allowing us to explicitly maximize the diversity along either prompt-dependent (semantic alignment) or prompt-independent (e.g., backgrounds) axes.

% It is worth noting that several prior works have revealed the trade-off among different axis among sample quality and diversity~\cite{zhang2025intricate,astolfi2024consistency}
% Specifically, Particle Guidance (PG)~\cite{pg2024} introduces a joint particle time-evolving potential guidance that alters the sampling process to be non independent to enhance the diversity among those samples.
% CADS~\cite{sadatcads} proposes to perturb the conditions during the sampling inference. 

\subsection{Latent Space Analysis for Generative Models}
\label{subsec:latent_space}

In an orthogonal line of research, recent studies in dynamic generative models~\cite{ho2020denoising,song2021score,liu2023rectifiedflow} have also investigated the geometric structures of the intermediate latent space to unlock more fine-grained control. 
Based on structural understanding, multiple works~\cite{park2023understanding,zhu2023boundary,wang2024diffusion,wang2025silent,baumann2025continuous} introduce geometrically grounded perturbation and guidance over the sampling process for downstream tasks like image editing and personalization.
Despite these advances, the application of such geometric insights to diversity control remains largely underexplored. The most relevant prior work, such as Scendi~\cite{ospanov2025scendi} and SPARKE~\cite{jalalisparke}, seeks to ground T2I diversity by decomposing the generated images in CLIP space~\cite{radford2021clip} with prompt-aware and model-aware components on high-level entropy estimates on distinct text-image pairs. 
Crucially, metrics like the Scendi Score degenerate to the standard VS in fixed-prompt settings due to non-invertible covariance matrices, limiting their utility for single-prompt diversity. 
In contrast, our work leverages explicit geometrical projections to define both a robust way to measure diversity and a corresponding inference-time guidance mechanism.

% Recent literature has also looked into the latent space of the T2I generative models.

\section{Spherically Disentangled Diversity Measure}
\label{sec:measure}

We now introduce our analysis for disentangling diversity within the CLIP embedding space~\cite{radford2021clip}. Specifically, we develop a geometric analysis to decompose the variance of a generated batch into distinct prompt-dependent and prompt-independent components, which enables precise measurement of diversity sources.

\subsection{Motivation and Problem Formulation}
\label{subsec:prob_measure}

Our motivation stems from the inherent under-specification nature of T2I generation: a single text prompt rarely constrains the full semantic and stylistic content of an image. Consider a fixed prompt such as \emph{``A black colored car.''} While the prompt explicitly specifies the subject (i.e., the car), it leaves a vast subspace of attributes (e.g., viewing angles, background, etc.) unspecified. This observation suggests that diversity in a generated batch manifests in two distinct forms: variations that adhere to the prompt constraints (prompt-dependent) and variations that explore the unspecified degrees of freedom (prompt-independent).

Based on this intuition, we formalize the problem of diversity measurement as follows:
Given a text prompt $c$ and a T2I model $p_\theta$, let $\mathcal{X} = \{\mathbf{x}_i\}_{i=1}^B$ denote a batch of $B$ generated images sampled from the conditional distribution $\mathbf{x} \sim p_\theta(\mathbf{x}|c)$. Our objective is to quantify the diversity of $\mathcal{X}$ not as a single scalar, but as a disentangled tuple $(\mathcal{D}_{\text{dep}}, \mathcal{D}_{\text{ind}})$, where $\mathcal{D}_{\text{dep}}$ (prompt-dependent diversity) measures the variance in how the model interprets the explicit constraints of the prompt $c$; and $\mathcal{D}_{\text{ind}}$ measures the variance of attributes that are orthogonal to the semantic direction defined by $c$.
We seek a metric space wherein these sources of variation can be geometrically isolated and measured.

\subsection{Spherical Disentanglement and Residual Analysis}
\label{subsec:spherical_disen}
To achieve the disentanglement formulated above, we require a metric space that satisfies two critical properties: Firstly, it should structurally align visual and textual representations within a shared manifold to allow geometric grounding of image variations relative to the text; Secondly, it should ideally possess a well-behaved topology, such as a hypersphere, to enable non-divergent diversity comparison through geometric constraints.
We therefore adopt the $d$-dimensional CLIP embedding space~\cite{radford2021clip} for our analysis, where its explicit normalization restricts all embeddings to a high-dimensional unit hypersphere $\mathbb{S}^{d-1}$.

Within this spherical geometry, given the normalized text embedding $\mathbf{e}_t$ and a batch of normalized image embeddings $\mathcal{P} = \{\mathbf{e}_i\}_{i=1}^B$. 
We can strictly decompose each $\mathbf{e}_i$:
\begin{equation}
\label{eq:1}
    \mathbf{e}_i = \sum_{k=1}^d \lambda_k \mathbf{u}_k,  \: \mathbf{u}_m^\top\mathbf{u}_n = 0 \: \forall m \neq n;
    \end{equation}
where $\{\mathbf{u}_k\}_{k=1}^d$ forms an orthonormal basis of the embedding space, and $\lambda_k = \mathbf{e}_i^\top \mathbf{u}_k$ are the scalar projection coefficients.
By construction, we align the first basis vector with the text prompt (i.e., $\mathbf{u}_1 = \mathbf{e}_t$), so that the first term $\lambda_1 \mathbf{u}_1$ captures the prompt-dependent component, while the remaining terms represent the orthogonal residual. We can therefore rewrite Eq.~\ref{eq:1} in CLIP embedding space as follows:
\begin{equation}
\mathbf{e}_i = (\mathbf{e}_i^\top \mathbf{e}_t) \mathbf{e}_{t} + \sum_{k=2}^d(\mathbf{e}_i^\top \mathbf{u}_k) \mathbf{u}_{k}.    
\end{equation}
The scalar projection $\mathbf{e}_i^\top \mathbf{e}_t$ is mathematically equivalent to the CLIPScore~\cite{hessel2021clipscore} for quantifying the semantic consistency between the generated image and the prompt.
In theory, a complete characterization of the prompt-independent residual would require analyzing the distribution across the entire $(d-1)$-dimensional orthogonal subspace spanned by $\{\mathbf{u}_k\}_{k=2}^d$. However, in practice, it is known that deep learning representations typically lie on a low-dimensional manifold~\cite{narayanan2010sample,bengio2013representation}. Consequently, the prompt-independent variation is likely to be not uniformly distributed but highly concentrated along a few principal directions. We therefore simplify the problem by seeking only the \textit{dominant} residual basis vector $\mathbf{u}_{\text{ind}}$ that maximizes the captured variance in the orthogonal complement. 
\re{We discuss alternative basis construction strategies and provide additional empirical justification for our design choices in Sec.~\ref{sec:experiments}.}

\begin{algorithm}[tb]
   \caption{Dominant Residual Basis Identification
   }
   \label{alg:basis}
\begin{algorithmic}
   \STATE {\bfseries Input:} Text embedding $\mathbf{e}_t$, image batch embeddings $\mathcal{P}=\{\mathbf{e}_i\}_{i=1}^B$, number of candidate directions $N$

    \STATE Generate $N$ direction vectors $\{\mathbf{r}_k\}_{k=1}^N$ orthogonal to $\mathbf{e}_t$ via Gram-Schmidt~\cite{leon2013gram}.
    \FOR{$k = 1$ to $N$}
       \STATE $E_k \leftarrow \frac{1}{B} \sum_{i=1}^B |\mathbf{e}_i^\top \mathbf{r}_k|$ 
   \ENDFOR
   \STATE $k^* \leftarrow \arg\max_k E_k$
   % \STATE \textbf{return} $\mathbf{u}_{\text{ind}} \leftarrow \mathbf{r}_{k^*}$ \quad \COMMENT{Prompt-independent principal residual direction}
   \STATE \textbf{return} $\mathbf{u}_{\text{ind}} \leftarrow \mathbf{r}_{k^*}$
\end{algorithmic}
\end{algorithm}

% we seek to construct a 2D orthonormal basis $\mathcal{B} = \{\mathbf{u}_{\text{dep}}, \mathbf{u}_{\text{ind}}\}$ that captures the principal modes of variation. 
% The first basis vector, which corresponds to the prompt-dependent axis, is naturally defined by the text anchor itself as $\mathbf{u}_{\text{dep}} = \mathbf{e}_t$.

% The core challenge lies in identifying the second basis vector $\mathbf{u}_{\text{ind}}$ that characterizes the principal generative residual.

To identify the optimal residual basis vector $\mathbf{u}_{\text{ind}}$, we employ a randomized search strategy within the tangent space of the text anchor. 
Specifically, we first generate a candidate set of $N$ direction vectors $\{\mathbf{r}_k\}_{k=1}^N$ that lie strictly within the hyperplane orthogonal to $\mathbf{e}_t$ (i.e., $\mathbf{r}_k^\top\mathbf{e}_t = 0$), and are orthogonal to each other (i.e., $\mathbf{r}_m^\top\mathbf{r}_n = 0$ $\forall  m \neq n$) using the Gram-Schmidt orthogonalization~\cite{leon2013gram}. 
Ideally, these candidates serve as a representative basis for the high-dimensional residual space. 
Next, to capture the dominant mode of visual variation unrelated to the text, we evaluate the alignment of the batch embeddings $\mathcal{P}$ with each candidate axis. We compute the mean absolute projection magnitude for each candidate and select the one that maximizes the captured energy:
\begin{equation}
\label{eq:3}
\mathbf{u}_{\text{ind}} = \mathop{\arg\max}_{\mathbf{r} \in \{\mathbf{r}_k\}} \frac{1}{B} \sum_{i=1}^B | \mathbf{e}_i^\top \mathbf{r} |.
\end{equation}
This identified vector $\mathbf{u}_{\text{ind}}$ represents the prompt-independent axis that best explains the residual variance of the generated batch.
The detailed algorithm is described in Algo.~\ref{alg:basis}.
In practice, we empirically observe that a small candidate set size of $N=10$ is sufficient to robustly identify the principal residual axis, which balances computational efficiency with estimation accuracy.

\begin{figure*}
    \centering
    \includegraphics[width=1.0\linewidth]{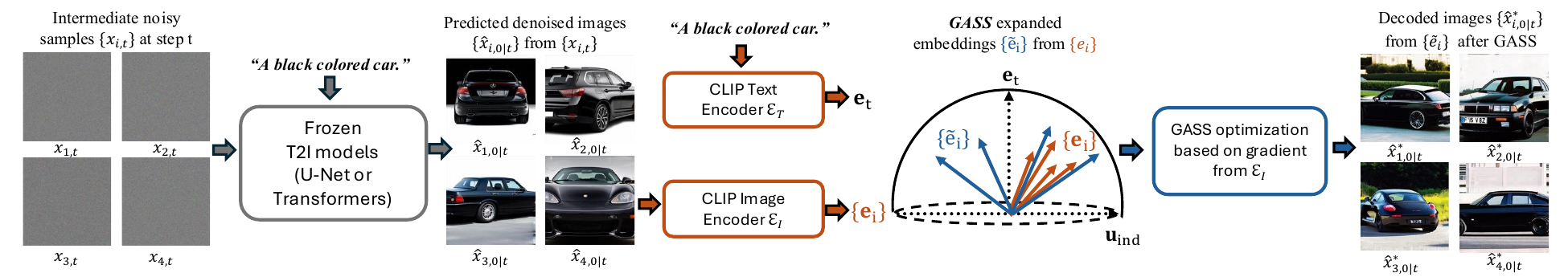}
    \caption{\textbf{Illustration of our proposed Geometry-Aware Spherical Sampling (GASS) method.} At the generation inference step $t$, original T2I sampling first estimates the predicted clean image $\hat{\mathbf{x}}_{0|t}$ based on the intermediate noisy samples $\mathbf{x}_t$, and then predict the noise we should remove from $\mathbf{x}_t$ to get $\mathbf{x}_{t-1}$. Our \emph{GASS} alters the predicted clean image from $\hat{\mathbf{x}}_{0|t}$ to $\hat{\mathbf{x}}^*_{0|t}$ through the geometric expansion (see Sec.~\ref{subsec:pertubation}) and the gradient-based optimization (see Sec.~\ref{subsec:optimization}), thus guiding the iterative sampling process with frozen generative backbones. }
    \label{fig:gass}
    % \vspace{-0.1in}
\end{figure*}

% , along with an analysis of alternative basis construction strategies explored during our design phase.

% We construct each candidate $\mathbf{r}_k$ by sampling a random vector $\mathbf{\epsilon}_k \sim \mathcal{N}(0, I)$ and projecting it onto the orthogonal complement of $\mathbf{u}_{\text{dep}}$ via Gram-Schmidt orthogonalization: $\mathbf{r}_k = \text{norm}(\mathbf{\epsilon}_k - (\mathbf{\epsilon}_k^\top \mathbf{u}_{\text{dep}})\mathbf{u}_{\text{dep}})$.
\begin{table}[t]
    \centering
    \caption{\textbf{Quantitative comparison of spherical spread scores (SPP) between generated and real images.} Real images show greater coverage spread than generated ones in both prompt-dependent ($\mathcal{D}_{\text{dep}}$) and prompt-independent ($ \mathcal{D}_{\text{ind}}$) measures. Mean and std reported over 1000 classes from ImageNet.}
    \scalebox{0.78}{
    \begin{tabular}{lcccc}
     \toprule
      Img Source   & ClipScore & $ \mathcal{D}_{\text{dep}}$ & $ \mathcal{D}_{\text{ind}}$ & $SPP$ \\ \hline
      SD2.1 & \textbf{0.303}$\pm$0.03 & 0.071$\pm$0.02 & 0.075$\pm$0.02 & 0.146$\pm$0.04\\
      SD3-M & 0.302$\pm$0.02 & 0.060$\pm$0.03 & 0.065$\pm$0.03 & 0.126$\pm$0.05 \\ 
      Real   & 0.293$\pm$0.02 & \textbf{0.110}$
      \pm$0.03 & \textbf{0.110}$\pm$0.02 & \textbf{0.220}$\pm$0.05\\ \bottomrule
    \end{tabular}}
    \label{tab:SPP_result}
    % \vspace{-0.15in}
\end{table}

\subsection{Spherical Spread Score for Diversity Measure}
\label{subsec:SSP_score}

Having established the orthogonal basis $\mathcal{B} = \{\mathbf{e}_{\text{t}}, \mathbf{u}_{\text{ind}}\}$, we now define a quantitative measure of diversity that captures the dispersion of generated images along these two axes. Specifically, we project the batch of image embeddings $\mathcal{P} = \{\mathbf{e}_i\}_{i=1}^B$ onto each basis vector and quantify the spread of the projected values, yielding two scalar diversity metrics defined as below:
\begin{equation}
\begin{aligned}
    & \mathcal{D}_{\text{dep}} = \max_{i} (\mathbf{e}_i^\top \mathbf{e}_t) - \min_{i}(\mathbf{e}_i^\top \mathbf{e}_t), \\
    & \mathcal{D}_{\text{ind}} = \max_{i} (\mathbf{e}_i^\top \mathbf{u}_{\text{ind}}) - \min_{i} (\mathbf{e}_i^\top \mathbf{u}_{\text{ind}}).
\end{aligned}
\end{equation}
$\mathbf{e}_i^\top \mathbf{u}$ stands for projection scalar of $\mathbf{e}_i$ onto the basis vector $\mathbf{u}$. 
Then we further define our overall diversity spread score as the sum of two spread scores: $SPP = \mathcal{D}_{\text{dep}} + \mathcal{D}_{\text{ind}}$.

Intuitively, these spread scores should effectively distinguish between image sets with varying diversity levels under the same text constraint. To empirically verify this, we compare the spread scores of real images from the ImageNet validation set~\cite{russakovsky2015imagenet,deng2009imagenet} against samples generated by SD2.1~\cite{rombach2022stabled} and SD3-M~\cite{esser2024sd3} using the template prompt ``A photo of \emph{[class label]}''. We observe that real-world data, which reflects natural complexity, yields significantly higher spread scores (approximately 50\% increase) compared to the generated distributions, as detailed in Tab.~\ref{tab:SPP_result}. 
\section{\emph{GASS} for Improved T2I Diversity}
\label{sec:gass_method}

Building on our geometric analysis above, we introduce \emph{GASS} (\textbf{G}eometry-\textbf{A}ware \textbf{S}pherical \textbf{S}ampling) to intervene in the generation inference process. Formally, we aim to increase the diversity score $SPP$ defined in Sec.~\ref{subsec:SSP_score}, thereby pushing the newly generated set $\mathcal{X}' = \{x'_i\}_{i=1}^B$ to cover a wider spread of the manifold measured on the CLIP sphere.

% \begin{figure*}
%     \centering
%     \includegraphics[width=1.0\linewidth]{icml2026/figs/fig2_pipeline.pdf}
%     \caption{\textbf{Illustration of our proposed Geometry-Aware Spherical Sampling (GASS) method at the generation step $t$.} At generation step $t$, original T2I sampling first estimates the predicted clean image $\hat{\mathbf{x}}_{0|t}$ based on the intermediate noisy samples $\mathbf{x}_t$, and then predict the actual noises we should remove from $\mathbf{x}_t$ to get $\mathbf{x}_{t-1}$. Our \emph{GASS} alters the predicted clean image from $\hat{\mathbf{x}}_{0|t}$ to $\hat{\mathbf{x}}^*_{0|t}$, thus guiding the iterative sampling process with frozen generative backbones. }
%     \label{fig:gass}
% \end{figure*}

% \vspace{-0.05in}
\subsection{Latent Dynamic Spherical Guidance}
\label{subsec:pertubation}

The first technical challenge is to design a latent perturbation method in the CLIP sphere such that the resulting set of image embeddings $\tilde{\mathcal{P}} = \{\tilde{\mathbf{e}}_i\}_{i=1}^B$ achieves a better spherical spread. Instead of adding isotropic noise in the high-dimensional space like most existing methods~\cite{pg2024,kirchhof2025spell,sadatcads}, our geometric framework allows us to inject and control diversity along the disentangled directions.

\begin{algorithm}[tb]
   \caption{Optimization based on CLIP Gradient 
   }
   \label{alg:opt}
\begin{algorithmic}
   \STATE {\bfseries Input:} Current batch estimates $\{\hat{x}_{i, 0|t}\}_{i=1}^B$, target embeddings $\tilde{\mathcal{P}}=\{\tilde{\mathbf{e}}_i\}_{i=1}^B$, CLIP encoder $\mathcal{E}_I$, step size $\eta$
   % \STATE {\bfseries Output:} Optimized batch $\{\hat{x}_{i, 0|t}^*\}_{i=1}^B$
   
   \STATE Encode batch estimates: $\{\mathbf{e}_i\}_{i=1}^B = \mathcal{E}_I(\{\hat{x}_{i, 0|t}\}_{i=1}^B)$
   % \STATE {\bfseries Compute Spherical Spread Loss:}
   \STATE $\mathcal{L}_{\text{spp}} = \sum_{i=1}^B (1 - \mathbf{e}_i^\top \tilde{\mathbf{e}}_i)$ \COMMENT{spherical spread loss}
   
   % \STATE {\bfseries Gradient Optimization:}
   % \STATE Calculate gradients: $\mathbf{G} = \nabla_{\{\hat{x}_{i, 0|t}\}} \mathcal{L}_{\text{spp}}$
   \STATE  $\hat{x}_{i, 0|t}^* \leftarrow \hat{x}_{i, 0|t} - \eta \cdot \nabla_{\hat{x}_{i, 0|t}} \mathcal{L}_{\text{spp}}$ \quad \text{for}\: $i=1 \dots B$
   
   \STATE \textbf{return} $\{\hat{x}_{i, 0|t}^*\}_{i=1}^B$
\end{algorithmic}
\end{algorithm}

\paragraph{Projection Expansion.}
For each image $i$ in the batch, we sample an expansion shift $\delta_{i}^k$ from a uniform distribution: 
\begin{equation}
\label{eq:5}
\delta^{k}_i \sim \mathcal{U}[-r_k, r_k],
\end{equation}
where $r_k>0$ is a hyperparameter controlling the expansion range for the specific axis (i.e., $r_{\text{dep}}$ for prompt-dependent variation along $\mathbf{e}_t$, and $r_{\text{ind}}$ for prompt-independent variation along $\mathbf{u}_{\text{ind}}$).
The perturbed target embedding $\tilde{\mathbf{e}}_i$ is then obtained by modulating the original decomposition:
\begin{equation}
\label{eq:6}
\tilde{\mathbf{e}}_i = (\mathbf{e}_i^\top \mathbf{e}_t + \delta_{i}^{\text{dep}})\mathbf{e}_{t} + (\mathbf{e}_i^\top \mathbf{u}_{\text{ind}} + \delta_{i}^{\text{ind}})\mathbf{u}_{\text{ind}} + \mathbf{r}_i,
\end{equation} 
where $\mathbf{r}_i$ represents the initial residual of $\mathbf{e}_i$ after removing the two principal components in $\mathbf{e}_t$ and $\mathbf{u}_{\text{ind}}$, defined as $\mathbf{r}_i = \mathbf{e}_i - (\mathbf{e}_i^\top \mathbf{e}_t) \mathbf{e}_{t} - (\mathbf{e}_i^\top \mathbf{u}_{\text{ind}}) \mathbf{u}_{\text{ind}}$.

\paragraph{Re-normalization.}
After obtaining the perturbed vector $\tilde{\mathbf{e}}_i$ from Eq.~\ref{eq:6}, we further apply re-normalization by projecting it back onto the unit hypersphere $\tilde{\mathbf{e}}_i \leftarrow \frac{\tilde{\mathbf{e}}_i}{|| \tilde{\mathbf{e}}_i ||_2}$. This step ensures that the guided target remains a valid representation within the CLIP embedding manifold, and empirically proven to be beneficial for the generation quality in our experiments in Sec.~\ref{sec:experiments}.

% \yz{TODO: add the theoretical justification from the hyper volume perspective.}

\paragraph{Theoretical Justifications.}
Intuitively, our latent spherical guidance directly increases the spread of the image batch in the CLIP embedding manifold. In fact, we can theoretically prove that the expected hypervolume is guaranteed to increase, as detailed below.

\begin{proposition}[Expected Geometric Volume Guarantee]
Consider a batch of $B$ points $\mathcal{P}=\{\mathbf{e}_i\}_{i=1}^B \subset \mathbb{S}^{d-1}$ on the CLIP hypersphere, where $\mathbb{S}^{d-1} \subset \mathbb{R}^d$. For each $\mathbf{e}_i$ after our proposed \emph{GASS} guidance defined in Eq.~\ref{eq:6}, the new set  $\tilde{\mathcal{P}}=\{\tilde{\mathbf{e}}_i\}_{i=1}^B$ has the expected hypervolume $\mathbb{E}[V(\tilde{\mathcal{P}})] > V(\mathcal{P})$.
\label{prop}
\end{proposition}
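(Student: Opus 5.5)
The plan is to make ``hypervolume'' precise in a way that is multilinear in the individual points, and then exploit the fact that the \emph{GASS} shifts in Eq.~\ref{eq:6} are independent, zero-mean perturbations along two fixed orthonormal directions. Concretely, stack the embeddings as rows of $\mathbf{E}\in\mathbb{R}^{B\times d}$ and take $V(\mathcal{P})^2=\det(\mathbf{E}\mathbf{E}^\top)$ (for $B\le d$), the squared volume of the parallelotope spanned by the points. If $B>d$, the same argument applies to the sum of squared $d$-minors, equivalently $\det(\mathbf{E}^\top\mathbf{E})$. Since the statement only asks for $\mathbb{E}[V(\tilde{\mathcal{P}})]>V(\mathcal{P})$, I would first prove the inequality for $\mathbb{E}[V^2]$. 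I would then transfer it to $V$, or state the result for the squared volume, if the first-moment version turns out not to follow directly.

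The first step is to write the unnormalized perturbed batch as $\tilde{\mathbf{E}}=\mathbf{E}+\mathbf{N}$, where $\mathbf{N}=\boldsymbol{\delta}^{\text{dep}}\mathbf{e}_t^\top+\boldsymbol{\delta}^{\text{ind}}\mathbf{u}_{\text{ind}}^\top$. Here $\mathbf{r}_i$ is unchanged by construction, so the only change is the shift of the two coordinates. The entries $\delta_i^{k}\sim\mathcal{U}[-r_k,r_k]$ are independent with mean zero and variance $r_k^2/3>0$. By Cauchy--Binet, $\det(\tilde{\mathbf{E}}\tilde{\mathbf{E}}^\top)=\sum_{S}\det(\tilde{\mathbf{E}}_{:,S})^2$, the sum running over $B$-subsets $S$ of columns taken in the rotated basis $\{\mathbf{e}_t,\mathbf{u}_{\text{ind}},\dots\}$; this is legitimate because the determinant is rotation invariant.

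In that basis only the first two columns are perturbed, and each row is perturbed independently. Expanding $\det(\mathbf{A}+\mathbf{N}_S)$ multilinearly in the rows, every cross term in $\mathbb{E}[\det(\cdot)^2]$ contains some $\delta_i^k$ to the first power and therefore vanishes. What remains is $\mathbb{E}\det(\tilde{\mathbf{E}}_{:,S})^2=\det(\mathbf{E}_{:,S})^2+(\text{sums of squared minors weighted by }r_k^2/3)$. Summing over $S$ gives $\mathbb{E}[V(\tilde{\mathcal{P}})^2]\ge V(\mathcal{P})^2$, with strict inequality as soon as one added term is nonzero. For example, the minor obtained by replacing row $i$'s $\mathbf{e}_t$-coordinate with $\delta_i^{\text{dep}}$ is nonzero unless the remaining rows are degenerate; this is a generic, measure-zero exception that I would state as a nondegeneracy assumption.

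The main obstacle is the re-normalization step, $\tilde{\mathbf{e}}_i\leftarrow\tilde{\mathbf{e}}_i/\|\tilde{\mathbf{e}}_i\|$, which rescales row $i$ and hence multiplies $V^2$ by $\prod_i\|\tilde{\mathbf{e}}_i\|^{-2}$. Here $\|\tilde{\mathbf{e}}_i\|^2=1+2(a_i\delta_i^{\text{dep}}+b_i\delta_i^{\text{ind}})+(\delta_i^{\text{dep}})^2+(\delta_i^{\text{ind}})^2$, with $a_i=\mathbf{e}_i^\top\mathbf{e}_t$ and $b_i=\mathbf{e}_i^\top\mathbf{u}_{\text{ind}}$, and this factor is random and correlated with the minors. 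My first attempt is to measure volume intrinsically on the sphere, i.e.\ as the Gram determinant of the two-dimensional projected coordinates $(a_i,b_i)$ after centering, which is exactly what the spread scores $\mathcal{D}_{\text{dep}}$ and $\mathcal{D}_{\text{ind}}$ probe. For that quantity the same zero-mean expansion gives $\mathbb{E}[\text{cov}]=\text{cov}+\tfrac{B-1}{B}\,\mathrm{diag}(r_{\text{dep}}^2,r_{\text{ind}}^2)/3$, and expected determinant monotonicity follows as above.

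If normalization must be kept, I would bound the effect of rescaling to second order in $r_k$. This would show that the positive $\Theta(r_k^2)$ gain from the squared minors dominates the normalization loss for $r_k$ small enough, or at least for the regime $|a_i|\approx 0.3$ and $|b_i|$ small observed in Tab.~\ref{tab:SPP_result}. Finally, Jensen's inequality goes the wrong way for passing from $V^2$ to $V$, so I would either state the guarantee for the squared volume, or argue via the small-$r_k$ expansion that the first-order term of $\mathbb{E}[V]$ vanishes and the second-order term is positive.
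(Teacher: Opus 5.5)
\textbf{There is a genuine gap at the last step.} Your Cauchy--Binet/multilinearity computation is sound, but it proves $\mathbb{E}[V(\tilde{\mathcal P})^2]>V(\mathcal P)^2$. The statement is about $\mathbb{E}[V(\tilde{\mathcal P})]$, and neither of your two exits closes that gap. Restating the guarantee for $V^2$ changes the proposition. The claim that the second-order term of $\mathbb{E}[V]$ is positive is not true without an extra hypothesis.

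To see why, write $W(\delta)$ for the vector of $B\times B$ minors of $\tilde{\mathbf E}$, so $V=\|W\|_2$. Split $W=W_0+W_1+W_2$ by degree in $\delta$; higher degrees vanish because the shifts have rank two. Then $\mathbb{E}[V(\tilde{\mathcal P})]=V(\mathcal P)+\mathbb{E}\|W_1^{\perp}\|^2/(2V(\mathcal P))+O(r^3)$, where $W_1^{\perp}$ is the component of $W_1$ orthogonal to $W_0$. This correction vanishes whenever the shifts cannot tilt $\mathrm{span}\{\mathbf e_i\}$. Concretely, for $B=d$, multilinearity and row independence give $\mathbb{E}[\det\tilde{\mathbf E}]=\det\mathbf E$. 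Once the $r_k$ are small relative to $|\det\mathbf E|$ the sign never flips, so $\mathbb{E}[V(\tilde{\mathcal P})]=V(\mathcal P)$ exactly. For example, take $d=B=2$, $\mathbf e_1=\mathbf e_t$, $\mathbf e_2=\mathbf u_{\text{ind}}$, and any $r_{\text{dep}}+r_{\text{ind}}<1$. There your strict gain in $\mathbb{E}[V^2]$ is pure variance, and the strict inequality fails.

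\textbf{The missing idea is to apply Jensen to $W$ rather than to $V^2$.} Run your cross-term cancellation at first order instead of second: it gives $\mathbb{E}[\det\tilde{\mathbf E}_{:,S}]=\det\mathbf E_{:,S}$ for every $S$, i.e.\ $\mathbb{E}[W]=W_0$. Convexity of the Euclidean norm then yields $\mathbb{E}[V(\tilde{\mathcal P})]\ge V(\mathcal P)$ for all $r_k$, with no small-$r$ expansion. Equality holds iff $W$ lies almost surely on the ray through $W_0$. So strictness follows under a hypothesis such as $V(\mathcal P)>0$ and $\mathbf u_{\text{ind}}\notin\mathrm{span}\{\mathbf e_i\}$, which is possible only when $B<d$ and is generic in practice. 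Under it, $\partial W/\partial\delta_i^{\text{ind}}$ at $\delta=0$ is not parallel to $W_0$.

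The paper's $V$ is the simplex volume $\sqrt{\det(\mathbf A^\top\mathbf A)}/(B-1)!$ on edge vectors, not your origin-anchored parallelotope. The argument still transfers, because the edge wedge is a linear image of $(1,\tilde{\mathbf e}_1)\wedge\cdots\wedge(1,\tilde{\mathbf e}_B)$, which is again multilinear in independent, mean-preserving rows.

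\textbf{Comparison with the paper.} The paper asserts $\Delta\mathbf G\succeq 0$, but only $\mathbb{E}[\Delta\mathbf G]\succeq 0$ holds, since single realizations can shrink edges. It then invokes determinant monotonicity and concludes with $\mathbb{E}[V(\tilde{\mathcal P})]\ge\sqrt{\mathbb{E}[\det\tilde{\mathbf G}]}/(B-1)!$. That is Jensen in the wrong direction --- exactly the obstacle you flagged. Your multilinearity computation is the rigorous replacement for its positive-semidefiniteness step, and the Jensen-on-$W$ step is what both arguments lack.

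\textbf{Re-normalization.} You can drop this worry. The statement concerns the targets of Eq.~\ref{eq:6}, and the paper's proof works with them before re-normalization. Your two-dimensional fallback would not handle it anyway: $(a_i,b_i)$ are also rescaled by $1/\|\tilde{\mathbf e}_i\|$, and that fallback measures a different quantity.
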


The key theoretical insight is that our \emph{GASS} guidance expands the Gram matrix determinant of the point set formed by the batch of images, which translates to the increased geometric hypervolume. 
A detailed proof of the Proposition~\ref{prop} is included in Appendix~\ref{app_sec:theory}.

% \begin{algorithm}[tb]
%    \caption{Optimization based on CLIP Gradient 
%    }
%    \label{alg:opt}
% \begin{algorithmic}
%    \STATE {\bfseries Input:} Current batch estimates $\{\hat{x}_{i, 0|t}\}_{i=1}^B$, Target embeddings $\{\tilde{\mathbf{e}}_i\}_{i=1}^B$, CLIP Encoder $\mathcal{E}_I$, Step size $\eta$
%    % \STATE {\bfseries Output:} Optimized batch $\{\hat{x}_{i, 0|t}^*\}_{i=1}^B$
   
%    \STATE Encode batch estimates: $\hat{\mathbf{E}} = \mathcal{E}_I(\{\hat{x}_{i, 0|t}\})$
%    \STATE {\bfseries Compute Spherical Spread Loss:}
%    \STATE $\mathcal{L}_{\text{spp}} = \sum_{i=1}^B (1 - \hat{\mathbf{e}}_i^\top \tilde{\mathbf{e}}_i)$ \COMMENT{batch-wise alignment}
   
%    \STATE {\bfseries Gradient Optimization:}
%    % \STATE Calculate gradients: $\mathbf{G} = \nabla_{\{\hat{x}_{i, 0|t}\}} \mathcal{L}_{\text{spp}}$
%    \STATE  $\hat{x}_{i, 0|t}^* \leftarrow \hat{x}_{i, 0|t} - \eta \cdot \nabla_{\hat{x}_{i, 0|t}} \mathcal{L}_{\text{spp}}$ \quad \text{for}\: $i=1 \dots B$
   
%    \STATE \textbf{return} $\{\hat{x}_{i, 0|t}^*\}_{i=1}^B$
% \end{algorithmic}
% \end{algorithm}

\subsection{SPP Gradient Optimization for T2I Generation}
\label{subsec:optimization}

After defining the target embedding set $\tilde{\mathcal{P}}=\{\tilde{\mathbf{e}}_i\}_{i=1}^B$ with enlarged spherical spread, the second technical challenge is to transfer this geometric intervention back into the generative sampling process.
Due to the fact that the CLIP does not have pre-trained decoder to directly convert the latent interventions to pixel space, we propose to translate the latent expansion by leveraging the gradients from the frozen image encoder for guiding the generation. Specifically, at a sampling step $t$, we first estimate the denoised image $\hat{x}_{0|t}$ from the current noisy latent $\mathbf{x}_t$ using the base T2I model’s prediction. This estimate is then fed into the CLIP image encoder $\mathcal{E}_I$ to obtain its current batch embedding  $\mathbf{e} = \mathcal{E}_I(\hat{x}_{ 0|t})$.
To align the generation with our diversity target $\tilde{\mathbf{e}}$ after \emph{GASS} guidance, we define a batch-wise loss $\mathcal{L}_{\text{SPP}}$ that measures the alignment between the current estimated embedding and the updated target after geometric expansion:
\begin{equation}
\mathcal{L}_{\text{SPP}} = \sum_{i=1}^B \left( 1 - \mathcal{E}_I(\hat{x}_{i, 0|t})^\top \tilde{\mathbf{e}}_i \right).
\end{equation}
Crucially, instead of modifying the noise prediction $\epsilon_\theta$, which would require backpropagation through the generative backbone, we directly optimize the estimated clean image $\{\hat{x}_{i, 0|t}\}_{i=1}^B$.
We compute the gradient of the loss and apply a correction step for each sample in the batch:
\begin{equation}
\hat{x}_{i, 0|t}^* \leftarrow \hat{x}_{i, 0|t} - \eta \cdot \nabla_{\hat{x}_{i, 0|t}} \mathcal{L}_{\text{SPP}},
\end{equation}
where $\eta$ is the learning rate. This optimized estimate is then substituted into the transition step of existing solvers from pre-trained T2I models, thus effectively steering the generation towards the diverse targets.
Detailed optimization algorithm is in Algo.~\ref{alg:opt}.
The overall pipeline of \emph{GASS} at a generative sampling step $t$ is illustrated in Fig.~\ref{fig:gass}, with the complete algorithm in Appendix~\ref{appsec:spherical_design}.

% \begin{algorithm}[tb]
%   \caption{GASS Optimization}
%   \label{alg:opt}
%   \begin{algorithmic}
%     \STATE {\bfseries Input:} data $x_i$, size $m$
%     \REPEAT
%     \STATE Initialize $noChange = true$.
%     \FOR{$i=1$ {\bfseries to} $m-1$}
%     \IF{$x_i > x_{i+1}$}
%     \STATE Swap $x_i$ and $x_{i+1}$
%     \STATE $noChange = false$
%     \ENDIF
%     \ENDFOR
%     \UNTIL{$noChange$ is $true$}
%   \end{algorithmic}
% \end{algorithm}

\begin{figure*}[th]
    \centering
    \includegraphics[width=1.0\linewidth]{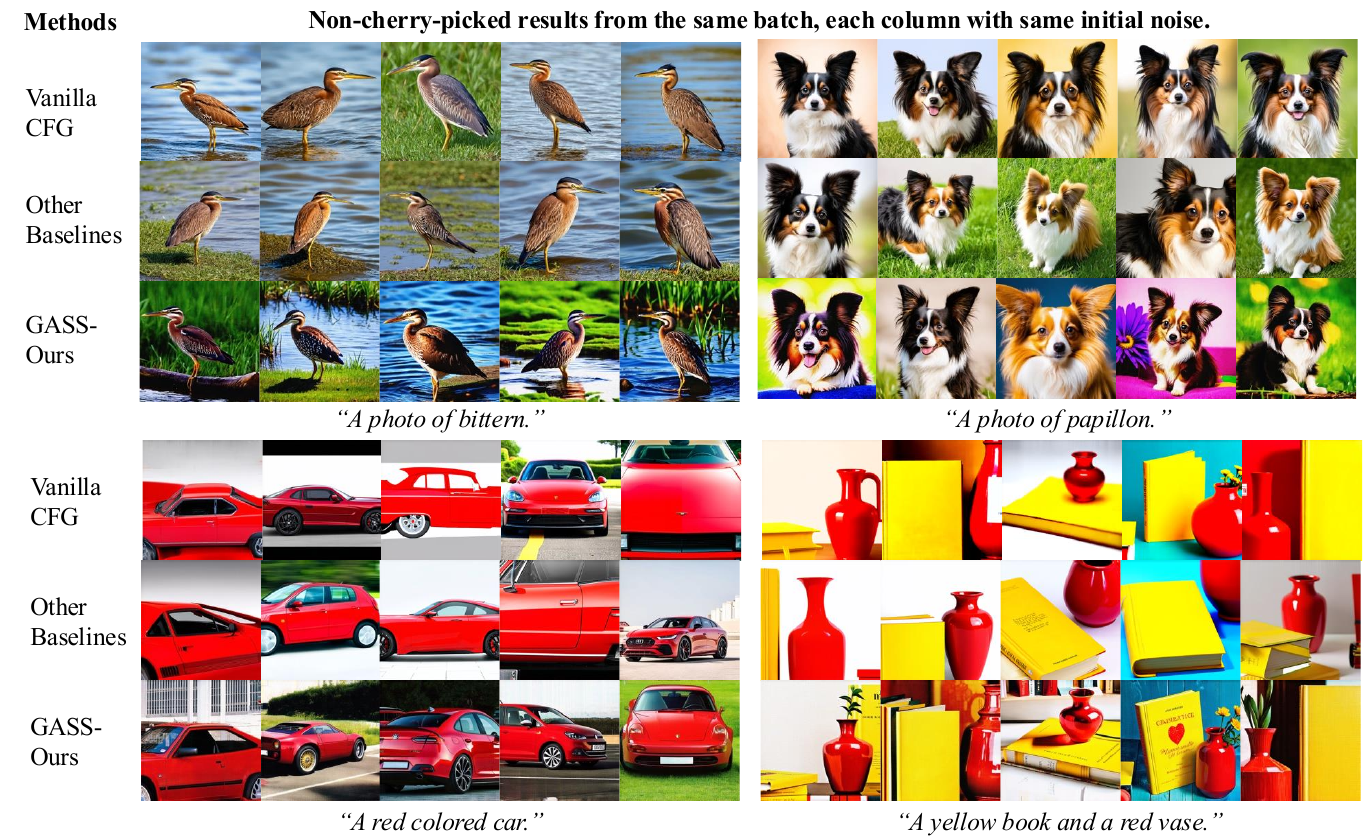}
    \caption{\textbf{Non-cherry-picked qualitative comparisons with other diversity enhancement methods on ImageNet~\cite{russakovsky2015imagenet} and Drawbench~\cite{saharia2022photorealistic}.} Compared to other methods (i.e., PG~\cite{pg2024}, CADS~\cite{sadatcads}, IG~\cite{kynkaanniemi2024ig}, and SPELL~\cite{kirchhof2025spell}), our proposed \emph{GASS} generates images with both richer semantic variation (e.g., object poses and layout) and more detailed and diverse backgrounds.}
    \label{fig:qualitative}
    \vspace{-0.1in}
\end{figure*}

\begin{table*}[h]
    \centering
    \caption{\textbf{Quantitative evaluations on ImageNet with SD2.1 and SD3-M as base T2I models.}}
    \scalebox{0.84}{
    \begin{tabular}{l|ccccc|ccccc}
    \toprule
       Base Models  &  \multicolumn{5}{c|}{SD2.1} & \multicolumn{5}{c}{SD3-M} \\ \toprule    
       Methods & Density$\uparrow$ & Coverage$\uparrow$ & VS$\uparrow$ & ClipScore$\uparrow$ & SPP$\uparrow$ &Density$\uparrow$ & Coverage$\uparrow$ & VS$\uparrow$ & ClipScore$\uparrow$ & SPP$\uparrow$ \\ \hline
        % Vanilla\\
        Vanilla Guidance (CFG) & \textbf{1.130} & \textbf{0.623} & 31.826 & 0.308 & 0.143 & 1.105 & 0.588 & 28.119 & 0.308 & 0.137 \\ 
        PG-ICLR'24 & 1.039 & 0.614 &32.082 & \textbf{0.309} &0.146 & 1.103& 0.586 & 28.119 & 0.308 & 0.129 \\
        CADS-ICLR'24 & 0.897 & 0.567 & 28.935 & 0.306& 0.142 & 1.374 & \textbf{0.636} & 28.456 & 0.309 & 0.133 \\
        IG-NeurIPS'24 & 0.901 &  0.574 &  28.935 &  0.306 & 0.140 & \textbf{1.389} & 0.627 & 27.415 & 0.310 &0.129 \\
        SPELL-ICML'25 & 0.912& 0.578 & 32.601 & 0.305 & 0.144 & 1.105 & 0.585 & 28.433&0.302 & 0.128 \\
        \emph{GASS} (ours) & 1.015 & 0.603 & \textbf{32.711} & \textbf{0.309} & \textbf{0.149} & 1.164 & 0.611& \textbf{28.877}&  \textbf{0.313} & \textbf{0.141}
         \\ \bottomrule
    \end{tabular}}
    \label{tab:imagenet}
    % \vspace{-0.12in}
\end{table*}

\vspace{-0.05in}
\section{Experiments}
\label{sec:experiments}

In this section, we describe our experimental setup and present our results and ablation studies~\footnote{All experiments were conducted by LIX and Princeton.}.

\subsection{Experimental Setup}
\label{subsec:exp_setup}

% Backbone architecture, and their brief description of training and sampling strategies.
% SD2.1 has been deleted due to the EU AI ACT
\paragraph{Base T2I Models.}
To demonstrate the general applicability of \emph{GASS}, we employ Stable Diffusion 2.1 (SD2.1) and SD3 Medium (SD3-M)~\cite{esser2024sd3} as our frozen generative backbones. These choices cover a wide spectrum of modern text-to-image generative models, spanning diffusion~\cite{ho2020denoising} versus rectified flow~\cite{liu2023rectifiedflow} generation paradigms, and U-Net~\cite{ronneberger2015unet} versus DiT~\cite{peebles2023dit} architectures.

\paragraph{Dataset and Benchmarks.}
We evaluate our method on ImageNet-1K~\cite{deng2009imagenet,russakovsky2015imagenet} and DrawBench~\cite{saharia2022photorealistic}. For ImageNet, we synthesize 50 images per class using the standard template ``A photo of [\emph{class label}]’’. For DrawBench, we generate 10 samples per prompt and batch. While ImageNet serves as a standard baseline as in prior literature~\cite{kirchhof2025spell}, DrawBench features prompts with higher semantic complexity and structural constraints, providing a more rigorous testbed for fine-grained diversity analysis.

\paragraph{Metrics.}
Our evaluation covers sample diversity, generative quality, and semantic consistency alignment. For ImageNet, we employ the classic Density and Coverage~\cite{naeem2020reliable} as indicators of fidelity and diversity, respectively. We complement these with ClipScore~\cite{hessel2021clipscore} for alignment, and VS~\cite{friedmanvendi2023} for intrinsic diversity.
For DrawBench, due to the absence of reference images, we utilize reference-free metrics: ImageReward~\cite{xu2023imagereward} for perceptual quality, VS for diversity, and ClipScore for consistency.
Additionally, we also report our proposed SPP to quantify the geometric spread of the generated samples.

\begin{table*}[h]
\centering
\caption{\textbf{Quantitative evaluations on DrawBench with SD2.1 and SD3-M as base T2I models.} }
\scalebox{0.95}{
\begin{tabular}{l|cccc|cccc}
\toprule
Base Models & \multicolumn{4}{c|}{SD2.1} & \multicolumn{4}{c}{SD3-M} \\ \toprule
Methods & VS$\uparrow$ & ImageReward$\uparrow$ & ClipScore$\uparrow$ & SPP$\uparrow$ & VS$\uparrow$ & ImageReward$\uparrow$ & ClipScore$\uparrow$ & SPP $\uparrow$ \\
\midrule
% Vanilla & 9.723 &  -1.243 & 0.256 & 0.167 & 9.518 &  -0.521  & 0.281 & 0.166 \\
Vanilla Guidance (CFG) & 8.599 & 0.217 & 0.306 & 0.122 & 8.115 &  0.779 & 0.318 & 0.113 \\
PG-ICLR'24 & 8.637 &  \textbf{0.231} & 0.306 &0.121 & 8.002 & \textbf{0.799} & 0.318 & 0.113\\
CADS-ICLR'24 & 8.784 & 0.173 & 0.305 & 0.125 & 8.118 &  0.726 &  0.316 & 0.112  \\
IG-NeurIPS'24 & 8.761 & 0.215  & 0.306 & 0.125 & 8.002 & 0.798& 0.318& 0.112 \\
SPELL-ICML'25 & 8.726 & 0.205 &  0.305 &  0.131& 8.166 & 0.671 & 0.315 & 0.112\\
\emph{GASS} (ours) & \textbf{8.847}  & 0.229 & \textbf{0.307} & \textbf{0.135} & \textbf{8.212} & 0.778 & \textbf{0.320}& \textbf{0.114} \\
\bottomrule
\end{tabular}}
\label{tab:drawbench}
% \vspace{-0.15in}
\end{table*}

\begin{table}[h]
    \centering
    \caption{\textbf{Quantitative analysis w.r.t. different levels of prompt complexities from Drawbench.} We show the performance change compared to the CFG baseline. For instance, 8.301 $\rightarrow$ 8.535 means that the metric changes from 8.301 (CFG baseline) to 8.535 (with GASS).}
    \label{tab:prompt_complexity}
    \scalebox{0.73}{
    \begin{tabular}{lcccc}
    \toprule
      Prompt   & VS$\uparrow$ & ImageReward$\uparrow$ & ClipScore$\uparrow$ & SPP$\uparrow$ \\  \hline
      
       Short   & 8.301$\rightarrow$\textbf{8.535} & \textbf{0.748}$\rightarrow$0.698 & \textbf{0.322}$\rightarrow$0.321 & 0.113$\rightarrow$\textbf{0.121} \\
       Medium & 7.663$\rightarrow$\textbf{7.918} & \textbf{1.115}$\rightarrow$1.043 & 0.318$\rightarrow$\textbf{0.322} & 0.103$\rightarrow$\textbf{0.107} \\
       Long & 7.549$\rightarrow$\textbf{7.935} & 0.572$\rightarrow$\textbf{0.622} & 0.310$\rightarrow$0.310 & 0.092$\rightarrow$\textbf{0.099}\\
    \bottomrule
    \end{tabular}}
\end{table}

\paragraph{Diversity Enhancement Baselines.}
We compare our approach with four recent and state-of-the-art (SOTA) sampling-based methods designed to enhance sample diversity under fixed prompts: Particle Guidance (PG)~\cite{pg2024}, CADS~\cite{sadatcads}, IG~\cite{kynkaanniemi2024ig}, and SPELL~\cite{kirchhof2025spell}. All of these methods are inference-time interventions that amplify diversity by introducing stochastic perturbations to either the intermediate latents or the conditioning signals during the generation sampling trajectory.
We replicate all baseline methods, either using their official public implementations or re-implementing them based on the original papers. 
% We compare our performance with four recent methods that focus on diversifying T2I models in the same setting, which aim to improve a set of generated images with a fixed given prompt. These methods include Particle Guidance (PG)~\cite{pg2024}, CADS~\cite{sadatcads}, IG~\cite{kynkaanniemi2024ig}, and SPELL~\cite{kirchhof2025spell}. All of the baseline methods are sampling-based techniques that introduce perturbations in the generation trajectories, either in the intermediate noisy latents or in the conditions.
% In addition, we also include the vanilla base model w/o CFG as a vanilla baseline.
% Specifically, Particle Guidance (PG)~\cite{pg2024} introduces a joint particle time-evolving potential guidance that alters the sampling process to be non independent to enhance the diversity among those samples.

\paragraph{Implementation Details.}
To test the generalization ability across varying scales, we generate images at $768^2$ for ImageNet and $512^2$ for Drawbench. 
During the \emph{GASS} gradient optimization stage, we employ the Adam optimizer with a learning rate of $1 \times 10^{-4}$ for a maximum of 60 steps. We utilize an early stopping strategy with a tolerance of $5 \times 10^{-4}$ and patience of 4 optimization steps. 
The default inference steps are set to be 50 and 28 for SD2.1 and SD3.5-M, respectively.
The default expansion ranges are set to be $r_{\text{dep}} = r_{\text{ind}} = 0.02$.
Our proposed \emph{GASS} is a sparse guidance mechanism that can be activated only over a specified interval of sampling steps, reducing computational overhead.
For \re{10$\sim$20} \emph{GASS} guidance steps along the SD3-M generation trajectory, the average cost to sample a batch of images is around \re{2.93$\sim$3.68 seconds}  on a Nvidia A100 GPU.
% All experiments are conducted on an AWS instance equipped with 8 Nvidia A100 GPU. 
% For implementation details 
% Further details are provided in the Appendix.
% \yz{Need to add the computational overhead.}

% \begin{figure}[h]
%     \centering
%     \includegraphics[width=1.0\linewidth]{icml2026/figs/fig4_control.pdf}
%     \caption{\textbf{\emph{GASS} controls on the source of diversity by expanding the geometric spread along specified direction.} Specially, \emph{GASS} on prompt-dependent axis $\mathbf{e}_t$ diversifies images through variations via poses and layout, while expansion along prompt-independent direction $\mathbf{u}_{\text{ind}}$ changes attributes like background and styles.}
%     \label{fig4:control}
% \end{figure}

\begin{figure}[t]
    \centering
    \includegraphics[width=1.0\linewidth]{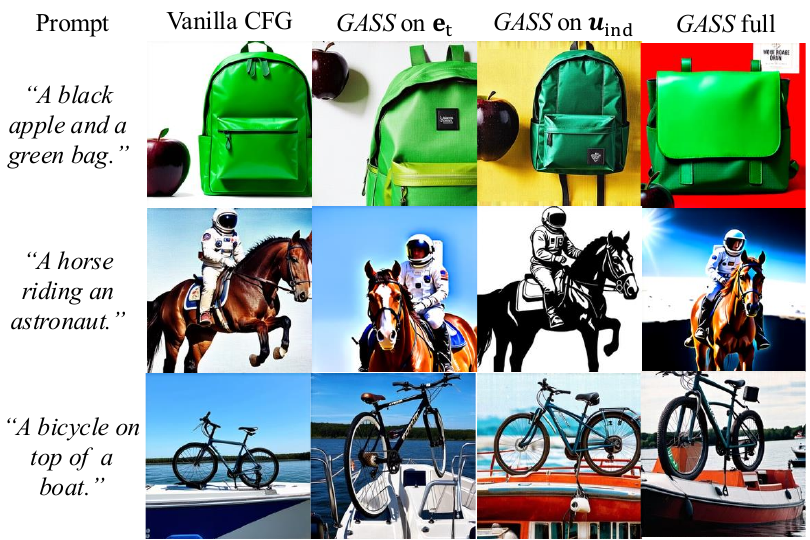}
    \caption{\textbf{\emph{GASS} controls the source of diversity by expanding the geometric spread along specified directions.} Specially, \emph{GASS} on prompt-dependent axis $\mathbf{e}_t$ diversifies images through variations via poses and layout, while expansion along prompt-independent direction $\mathbf{u}_{\text{ind}}$ changes attributes like background and styles.}
    \label{fig4:control}
    % \vspace{-0.15in}
\end{figure}

\subsection{Main Results and Analysis}
\label{subsec:results}

\paragraph{Diversity Comparison.}
We report the main results on ImageNet and DrawBench in Tab.~\ref{tab:imagenet} and Tab.~\ref{tab:drawbench}, respectively, where we evaluate the generated images in terms of diversity, perceptual quality, and text–image consistency alignment. Compared to recent diversity-enhancement methods that primarily maximize intra-batch sample dissimilarity, our approach improves diversity in both reference-based and reference-free evaluations, while maintaining competitive quality and consistency. Notably, \emph{GASS} achieves the largest gains on diversity-oriented metrics (e.g., VS~\cite{friedmanvendi2023}) with minimal degradation, or even slight improvements, on quality and alignment metrics, highlighting the effectiveness of our geometry-aware design.
This is further qualitatively demonstrated in Fig.~\ref{fig:qualitative}, where we present \textbf{non-cherry-picked} comparisons with baseline methods across both benchmarks. Notably, \emph{GASS} not only introduces semantic variations comparable to other methods but also generates \textbf{significantly more detailed backgrounds}. In contrast, other methods often produce ambiguous and smoothed background regions. We attribute this improvement to our explicit expansion of the geometric spread along the prompt-independent orthogonal direction.

\paragraph{Controllability in Disentangled Diversity Sources.}
Given our orthogonal basis decomposition, we can selectively control diversity enhancement from specific sources by modulating the expansion range $r_k$. In Fig.~\ref{fig4:control}, we illustrate this disentanglement controllability by expanding along the prompt-dependent direction ($\mathbf{e}_t$), prompt-independent direction ($\mathbf{u}_{\text{ind}}$), or both. We empirically observe that prompt-dependent expansion introduces semantic variations such as layout and object poses, while prompt-independent expansion generates diversity through backgrounds and styles.

\paragraph{Correlation among Diversity, Quality and Alignment.}
Prior work~\cite{zhang2025intricate,astolfi2024consistency} explores evaluation perspectives including diversity, quality, and alignment. Consistent with their findings, Tab.~\ref{tab:imagenet} and Tab.~\ref{tab:drawbench} reveal this similar trade-off, where diversity gains typically incur quality drops across different methods. In general, our approach achieves superior diversity improvements with minimal quality degradation.

\paragraph{Diversity under More Complex Prompts.}
In addition, it is worth noting that previous works~\cite{ospanov2025scendi,zhang2025intricate} reveal that image diversity can often be introduced through more detailed and specified prompts, thus obscuring the true effect of diversity  enhancement from the model perspective. Notably, we demonstrate that \emph{GASS} introduces diversity even under these conditions,  as shown in Fig.~\ref{fig:prompt}. While vanilla CFG already outputs more diverse images with more specified prompts, our method still  introduces further variations across unspecified attributes.

\re{We further conduct a quantitative analysis.
Specifically, we partition the 200 text prompts from the DrawBench~\cite{saharia2022photorealistic} into three distinct categories based on word count: short ( $\le$ 8 words), medium ( $9-14$ words), and long/complex ( $\ge$ 15 words). These categories consist of 92, 62, and 46 prompts, respectively. The qualitative breakdown is provided below in Tab.~\ref{tab:prompt_complexity}, where we indicate the performance gain compared to the CFG baseline. }
\re{Interestingly, while human-perceived visual diversity appears to increase with longer and more complex prompts, we observe that diversity metrics (e.g., VS and SPP) actually exhibit a decreasing trend. Despite this, our proposed \textit{GASS} consistently enhances diversity across all prompt complexity categories. Notably, based on the Vendi scores, the margin of improvement achieved by GASS becomes increasingly pronounced as prompt length and complexity grow.}

% For instance, 8.301 $\rightarrow$ 8.535 means that the metric increases from 8.301 (CFG baseline) to 8.535 (with GASS).}

% \re{We further conduct quantitative studies w.r.t different levels of complexities, with the detailed results in Sec.~\ref{appsubsec:ablation}}.

% \vspace{-0.05in}
\subsection{Ablation Studies and Analysis}
\label{subsec:ablation}

We further investigate several key designs of the proposed \emph{GASS} method through ablation studies.
Tab. ~\ref{tab:ablation_redisual} and Tab.~\ref{tab:ablation} summarize the quantitative evaluation results.

\paragraph{Alternative Decomposition Basis Construction Methods.}
\re{
To demonstrate the effectiveness of the basis selection method presented in Sec.~\ref {subsec:spherical_disen}, we further investigate two alternative strategies as described below.
\textbf{Isotropic Perturbation (IP)}: We sample both perturbation directions randomly from an isotropic orthogonal basis;
\textbf{Random Direction (RD):} We keep the prompt-dependent base $\mathbf{e}_t$ unchanged, and replace $\mathbf{u}_{ind}$ with a random vector orthogonal to $\mathbf{e}_t$. }
\re{
The results are presented in Tab.~\ref{tab:ablation_redisual}, where we observe that both isotropic perturbations and a random selection of the dominant residual base on the sphere do not perform well in the T2I diversity enhancement task, demonstrating the effectiveness of our basis construction method. In particular, in the case of an isotropic perturbation, where $\mathbf{e}_t$ is replaced by a random direction, the ClipScore drops from 0.320 to 0.308, indicating that it hinders the semantic alignment between the prompt and the generated images.
}

\paragraph{Re-Normalization.}
In our proposed \textit{GASS} described in Sec.~\ref{sec:gass_method}, we re-normalize the image embedding $\tilde{\mathbf{e}_i}$ to constrain it within the unit hypersphere. Intuitively, this keeps perturbed vectors in the high-density in-distribution region. Our empirical ablation results from Tab.~\ref{tab:ablation} show that removing re-normalization negatively affects the image quality, inducing a drop of ImageReward~\cite{xu2023imagereward} and ClipScore~\cite{hessel2021clipscore} while increasing slightly on diversity measures.

\begin{table}[t]
    \centering
    \caption{\textbf{Ablation results on different strategies for constructing the decomposition basis.} \emph{IP} stands for Isotropic Perturbation, and \emph{RD} represents Random Direction.}
    \label{tab:ablation_redisual}
    \scalebox{0.94}{
    \begin{tabular}{lcccc}
    \toprule
        Method & VS$\uparrow$ & ImageReward$\uparrow$ & ClipScore$\uparrow$ & SPP$\uparrow$ \\  \hline
        IP & 8.203 & 0.774 & 0.308 & 0.113 \\
        RD & 8.206 & \textbf{0.778} & 0.313 & 0.113  \\
        \textit{GASS} & \textbf{8.212} & \textbf{0.778} & \textbf{0.320} & \textbf{0.114} \\ 
         \bottomrule
    \end{tabular}}
\end{table}

\begin{table}[t]
    \centering
    \caption{\textbf{Additional ablation results of \emph{GASS} variants.} We show the impact of the re-normalization technique, expansion range, and perturbation steps on Drawbench.}
    \scalebox{0.79}{
    \begin{tabular}{lcccc}
    \toprule
     \emph{GASS} Variants    &  VS$\uparrow$ & ImageReward$\uparrow$ & ClipScore$\uparrow$ & SPP$\uparrow$ \\ \hline
    % IP & 8.203 & 0.774 & 0.308 & 0.113 \\
    % RD & 8.206 & \textbf{0.778} & 0.313 & 0.113  \\
    % GASS & \textbf{8.212} & \textbf{0.778} & \textbf{0.320} & \textbf{0.114}
    %  \\ \hline \hline
     w/o Norm. & 8.876 & 0.732 & 0.313 & 0.123 \\ \hline
     $r_{\text{dep}}=0$, $r_{\text{ind}}=0.02$ & 8.207 & 0.787 & 0.319 & 0.111 \\
      $r_{\text{dep}}=0.02$, $r_{\text{ind}}=0$ & 8.206 & 0.780 & 0.320 & 0.112  \\
     $r_{\text{dep}}=r_{\text{ind}}=0.02$ & 8.212 & 0.778 & 0.320 & 0.114 \\
     $r_{\text{dep}}=r_{\text{ind}}=0.05$ & 8.205 & 0.778 & 0.320 & 0.112 \\ \hline
     t=10 (consecutive) & 8.215 & 0.808 & 0.318 & 0.114 \\
     t=10 (uniform) & 8.127 &  0.757 & 0.321 & 0.112 \\
     t=15 (uniform) & 8.202 & 0.784 & 0.319 & 0.113 \\
     t=20 (uniform) & 8.212 & 0.778 &  0.320 & 0.114\\
    \bottomrule
    \end{tabular}}
    \label{tab:ablation}
\end{table}

\paragraph{Expansion Range.}
In our proposed sampling method, we define expansion ranges via hyperparameters $r_{\text{dep}}$ and $r_{\text{ind}}$ along $\mathbf{e}_t$ and $\mathbf{u}_{\text{ind}}$, respectively. Ablation experiments in Tab.~\ref{tab:ablation} show that $r=0.02$ achieves optimal trade-offs across different evaluation metrics. In addition, expanding along both axes yields the best overall diversity gains compared to single direction expansion.

\textbf{Perturbation Steps.}
Our \emph{GASS} method is sparse, which requires application only over a subset of sampling steps.
\re{We evaluate two scheduler designs, uniform and consecutive \emph{GASS} steps, with results reported in Tab.~\ref{tab:ablation}. The early consecutive scheduler applies \emph{GASS} for 10 sampling steps starting from the initial noisy Gaussian; the uniform scheduler steers generation at uniformly sampled steps. The early consecutive scheduler achieves higher VS and ImageReward scores, at the cost of a marginal drop in CLIPScore. Qualitatively, consecutive \emph{GASS} steps in the early denoising trajectory tend to generate images with lower color saturation in the final generated images compared to the uniform counterpart, as illustrated in Fig.~\ref{fig:saturation}.
}

% We ablate the number of steps for the \emph{GASS} sampling, with results shown in Tab.~\ref{tab:ablation}. 
% \re{We}

% In practice, we note that 15-20 steps are sufficient to achieve good diversity improvements.

\begin{figure}[t]
    \centering
    \includegraphics[width=1.0\linewidth]{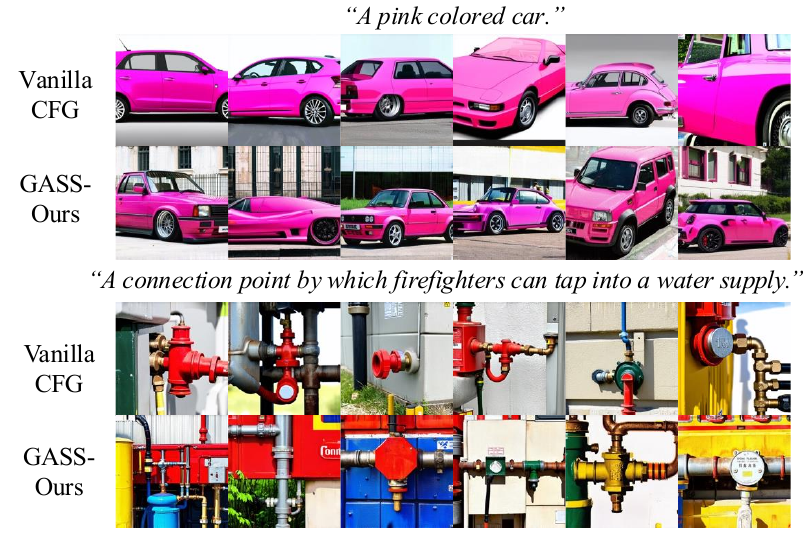}
    \caption{\textbf{\emph{GASS} still introduces generated image diversity, even when provided with more complex text prompts.}}
    \label{fig:prompt}

\end{figure}

\begin{figure}[t]
    \centering
    \includegraphics[width=1.0\linewidth]{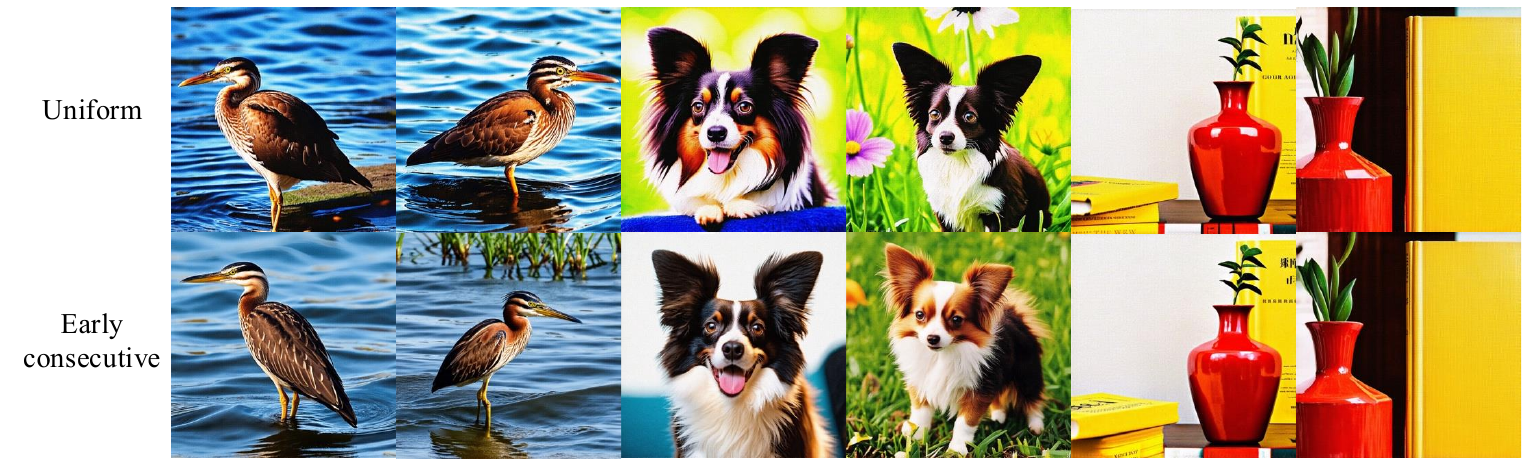}
    \caption{\textbf{\emph{GASS} can be applied sparsely on early consecutive or uniform perturbation steps along the T2I generation trajectories.} The early consecutive scheduler tends to generate images with lower color saturation compared to the uniform counterpart.}
    \label{fig:saturation}
\end{figure}

% \vspace{-0.05in}
\section{Conclusion and Discussions}
\label{sec:conclu}
% \vspace{-0.1in}

In this work, we investigate sample diversity in T2I generation through the lens of spherical geometry. By decomposing diversity into prompt-dependent and prompt-independent components grounded in the geometric structure of the CLIP space, we introduce a principled framework to quantify variation along these orthogonal directions. We further propose \emph{GASS}, a geometry-aware sampling guidance that enhances diversity in a controllable manner via dynamic interventions during inference. 
Experiments across diverse T2I backbones and benchmarks demonstrate the effectiveness and generalizability of our approach.

A potential future direction could be extending the proposed geometric decomposition beyond prompts (e.g., to multi-condition inputs such as layout or reference images) may enable finer control over which factors of variation are amplified.
Limitations and future directions are further discussed in the Appendix~\ref{appsec:discussions}.
 % We also include an extended discussion of the limitations in the Appendix~\ref{appsec:discussions}. 

% \textbf{Future Directions.}
% Several directions remain open. First, extending the proposed geometric decomposition beyond prompts (e.g., to multi-condition inputs such as layout or reference images) may enable finer control over which factors of variation are amplified. It could also be worth exploring a more direct connection between geometric spread and downstream utility, through aspects such as incorporating additional perceptual constraints or human-preferred diversity calibrations.

% \section*{Accessibility}

% Authors are kindly asked to make their submissions as accessible as possible
% for everyone including people with disabilities and sensory or neurological
% differences. Tips of how to achieve this and what to pay attention to will be
% provided on the conference website \url{http://icml.cc/}.

% \section*{Software and Data}

% If a paper is accepted, we strongly encourage the publication of software and
% data with the camera-ready version of the paper whenever appropriate. This can
% be done by including a URL in the camera-ready copy. However, \textbf{do not}
% include URLs that reveal your institution or identity in your submission for
% review. Instead, provide an anonymous URL or upload the material as
% ``Supplementary Material'' into the OpenReview reviewing system. Note that
% reviewers are not required to look at this material when writing their review.

% Acknowledgements should only appear in the accepted version.
\newpage
\section*{Acknowledgements}
This work is supported through the research grant from Meta Inc., under grant number NOA AWD1008796. 
JL also acknowledges the funding support by the French National Research Agency (ANR) via the “GraspGNNs” JCJC grant (ANR-24-CE23-3888).

% \yz{Check with other collaborators.}

% \textbf{Do not} include acknowledgements in the initial version of the paper
% submitted for blind review.

% If a paper is accepted, the final camera-ready version can (and usually should)
% include acknowledgements.  Such acknowledgements should be placed at the end of
% the section, in an unnumbered section that does not count towards the paper
% page limit. Typically, this will include thanks to reviewers who gave useful
% comments, to colleagues who contributed to the ideas, and to funding agencies
% and corporate sponsors that provided financial support.

% \newpage
\section*{Impact Statement}

This paper presents work whose goal is to advance the field of generative models by improving sample diversity in text-to-image generation, with the broader goal of mitigating potential negative societal impacts related to bias and fairness in generated images. There are potential societal consequences of our work, as is similar to the case for many other works in AI generation. However, from a technical standpoint, we do not identify any specific, unusual risks beyond those already associated with contemporary text-to-image generative models.

% , none
% which we feel must be specifically highlighted here

% Authors are \textbf{required} to include a statement of the potential broader
% impact of their work, including its ethical aspects and future societal
% consequences. This statement should be in an unnumbered section at the end of
% the paper (co-located with Acknowledgements -- the two may appear in either
% order, but both must be before References), and does not count toward the paper
% page limit. In many cases, where the ethical impacts and expected societal
% implications are those that are well established when advancing the field of
% Machine Learning, substantial discussion is not required, and a simple
% statement such as the following will suffice:

% ``This paper presents work whose goal is to advance the field of Machine
% Learning. There are many potential societal consequences of our work, none
% which we feel must be specifically highlighted here.''

% The above statement can be used verbatim in such cases, but we encourage
% authors to think about whether there is content which does warrant further
% discussion, as this statement will be apparent if the paper is later flagged
% for ethics review.

% In the unusual situation where you want a paper to appear in the
% references without citing it in the main text, use \nocite
% \nocite{langley00}

% \newpage
\bibliography{example_paper}
\bibliographystyle{icml2026}

%%%%%%%%%%%%%%%%%%%%%%%%%%%%%%%%%%%%%%%%%%%%%%%%%%%%%%%%%%%%%%%%%%%%%%%%%%%%%%%
%%%%%%%%%%%%%%%%%%%%%%%%%%%%%%%%%%%%%%%%%%%%%%%%%%%%%%%%%%%%%%%%%%%%%%%%%%%%%%%
% APPENDIX
%%%%%%%%%%%%%%%%%%%%%%%%%%%%%%%%%%%%%%%%%%%%%%%%%%%%%%%%%%%%%%%%%%%%%%%%%%%%%%%
%%%%%%%%%%%%%%%%%%%%%%%%%%%%%%%%%%%%%%%%%%%%%%%%%%%%%%%%%%%%%%%%%%%%%%%%%%%%%%%
\newpage
\appendix
\onecolumn

\section*{Appendices}

The appendix is structured as follows: 
First, Sec. ~\ref{app_sec:theory} provides the formal theoretical proof of Proposition ~\ref{prop} presented in the main paper.
Next, Sec. ~\ref{appsec:spherical_design} details the overall algorithm, summarizing the complete process of our proposed \emph{GASS} method.
Sec.~\ref{appsec:exp_results} includes additional experimental details, including additional implementation details, and more qualitative results.
Sec.~\ref{appsec:discussions} discusses the limitations and analyzes failure cases observed in our experiments, proposing several promising future directions.
% Additionally, we provide the anonymous core code of our \emph{GASS} method in the supplementary materials.

\section{Theoretical Justification on the Diversity Spread and Hypervolume Expansion after \emph{GASS}}
\label{app_sec:theory}

We provide a theoretical justification for the volume expansion induced by our proposed \emph{GASS}, as stated in Proposition~\ref{prop}.

% \begin{proposition}
\textbf{Proposition 4.1} (Expected Geometric Volume Guarantee).
\textit{
Consider a batch of $B$ points $\mathcal{P}=\{\mathbf{e}_i\}_{i=1}^B \subset \mathbb{S}^{d-1}$ on the CLIP hypersphere, where $\mathbb{S}^{d-1} \subset \mathbb{R}^d$. For each $\mathbf{e}_i$ after our proposed \emph{GASS} guidance defined in Eq.~\ref{eq:6}, the new set  $\tilde{\mathcal{P}}=\{\tilde{\mathbf{e}}_i\}_{i=1}^B$ has the expected hypervolume $\mathbb{E}[V(\tilde{\mathcal{P}})] > V(\mathcal{P}).$}

\begin{proof}

The proof proceeds by establishing an explicit relationship between the geometric hypervolume of a point set and the Gram determinant of their inner product matrix. The crucial observation is that independent and orthogonal perturbations through expansion parameter $r_k$ induce positive-definite corrections to the Gram determinant, which increases the hypervolume of the original point set.

\textbf{Step 1: Hypervolume via Gram Determinant}

We characterize the hypervolume via the Gram determinant of edge vectors under Gram-Schmidt orthogonalization for the point set $\mathcal{P}$.
Specifically, for each pairs of point $\{\mathbf{e}_i,\mathbf{e}_j\}$, we form the edge vector $\textbf{e}_{ij} = \mathbf{e}_j - \mathbf{e}_i$.
We then construct $d$ orthogonal basis as described in Sec.~\ref{subsec:spherical_disen}, including our main expansion axis $\mathbf{e}_t$ and $\mathbf{u}_{\text{ind}}$. By projecting the edge vector $\mathbf{e}_{ij}$ onto $d$ orthogonal basis, and stacking those $B-1$ linearly independent edge vectors row-wise, we can define the reduced coordinate edge matrix $\mathbf{A} \in \mathbb{R}^{d\times(B-1)}$, the $(B-1)$-dimensional hypervolume is thus given by:

\begin{equation}
    V(\mathcal{P}) = \frac{\sqrt{\text{det}(\mathbf{A}^{\top}\mathbf{A}})}{(B-1)!} = \frac{\sqrt{\text{det}(\mathbf{G})}}{(B-1)!},
\end{equation}

where $\mathbf{G} = \mathbf{A}^{\top}\mathbf{A}$ is the Gram matrix over edge vectors, and $det(\mathbf{A}^{\top}\mathbf{A})$ is the Gram determinant~\cite{horn2012matrix}.

\textbf{Step 2: GASS Expansion on Edge Vectors}

\textit{Part 2.1: Commutativity of projection with perturbations}

We first establish that perturbations applied to projected coordinates are mathematically equivalent to perturbations in the original space followed by projection. 
Let $\Pi$ denote orthogonal projection onto the $d$-dimensional subspace spanned by orthonormal basis vectors $\{\mathbf{u}_1, \mathbf{u}_2,..., \mathbf{u}_d\}$.
\begin{lemma}[Projection Commutativity]
\label{lemma:commutativity}
For any vectors $\mathbf{x}$, $\mathbf{y}$ and perturbations $\delta\mathbf{x}$, $\delta\mathbf{y}$, we have:
\begin{equation}
\Pi(\mathbf{x}+\delta\mathbf{x}) - \Pi(\mathbf{y} + \delta\mathbf{y}) = \Pi(\mathbf{x} -\mathbf{y}) + \Pi(\delta\mathbf{x} -\delta\mathbf{y}).
\end{equation}
\end{lemma}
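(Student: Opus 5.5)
The statement to prove is Lemma~\ref{lemma:commutativity}. The plan is to use only the linearity of the orthogonal projection $\Pi$. Since $\Pi$ is the orthogonal projection onto the span of $\{\mathbf{u}_1,\dots,\mathbf{u}_d\}$, it can be written explicitly as
\begin{equation}
\Pi(\mathbf{z}) = \sum_{k=1}^d (\mathbf{z}^\top \mathbf{u}_k)\,\mathbf{u}_k = \mathbf{U}\mathbf{U}^\top \mathbf{z},
\end{equation}
where $\mathbf{U}$ stacks the basis vectors as columns. This is the same coordinate expansion used in Eq.~\ref{eq:1}, with coefficients $\lambda_k = \mathbf{z}^\top\mathbf{u}_k$. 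The map is therefore given by a matrix, so it is linear: $\Pi(a\mathbf{z}+b\mathbf{w}) = a\Pi(\mathbf{z}) + b\Pi(\mathbf{w})$ for all scalars $a,b$ and vectors $\mathbf{z},\mathbf{w}$. I would state this first. It follows directly from bilinearity of the inner product in each coefficient $\mathbf{z}^\top\mathbf{u}_k$.

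Next I would apply linearity three times. Additivity gives $\Pi(\mathbf{x}+\delta\mathbf{x}) = \Pi(\mathbf{x}) + \Pi(\delta\mathbf{x})$ and likewise $\Pi(\mathbf{y}+\delta\mathbf{y}) = \Pi(\mathbf{y}) + \Pi(\delta\mathbf{y})$. Subtracting these and regrouping terms gives $[\Pi(\mathbf{x})-\Pi(\mathbf{y})] + [\Pi(\delta\mathbf{x})-\Pi(\delta\mathbf{y})]$. Linearity with $a=1$, $b=-1$ then combines each bracket into $\Pi(\mathbf{x}-\mathbf{y})$ and $\Pi(\delta\mathbf{x}-\delta\mathbf{y})$ respectively, which is the claimed identity.

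No step here is a real obstacle; the lemma is just linearity of $\Pi$. The one point needing care is making clear that $\Pi$ is the fixed linear projection defined by the chosen orthonormal basis. In particular, it is not the sphere re-normalization $\tilde{\mathbf{e}}_i \leftarrow \tilde{\mathbf{e}}_i/\|\tilde{\mathbf{e}}_i\|_2$, which is nonlinear and for which the identity would fail. I would add a remark that, because the basis spans all of $\mathbb{R}^d$, $\Pi$ is the identity in original coordinates, so the lemma holds trivially. It becomes useful later because it lets the edge vectors of the perturbed set decompose as (projected original edge) plus (difference of the axis-aligned shifts $\delta_i^{\text{dep}}\mathbf{e}_t + \delta_i^{\text{ind}}\mathbf{u}_{\text{ind}}$). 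That decomposition is what the Gram-determinant step of the proposition needs.
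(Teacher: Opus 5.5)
Your proof is correct and follows the paper's argument: both rest only on the linearity of the orthogonal projection $\Pi$, expanding $\Pi(\mathbf{x}+\delta\mathbf{x})$ and $\Pi(\mathbf{y}+\delta\mathbf{y})$ by additivity and then regrouping. You add two accurate points the paper does not spell out: the explicit form $\Pi = \mathbf{U}\mathbf{U}^\top$, which is the identity here because the basis spans all of $\mathbb{R}^d$, and the caveat that the nonlinear re-normalization step is not covered by this identity.
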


The proof of Lemma~\ref{lemma:commutativity} is obvious, because the orthogonal projection is a linear operator, so we have $\Pi(\mathbf{x}+\delta\mathbf{x}) - \Pi(\mathbf{y} + \delta\mathbf{y}) = \Pi(\mathbf{x}) + \Pi(\delta\mathbf{x}) - \Pi(\mathbf{y}) - \Pi(\delta\mathbf{y}) = \Pi(\mathbf{x} -\mathbf{y}) + \Pi(\delta\mathbf{x} -\delta\mathbf{y})$.
This ensures that when we perturb the projected coordinates along arbitrary $\mathbf{u}_i$ and $\mathbf{u}_j$, the resulting changes to edge vectors in the projected space directly reflect the perturbations applied.

\textit{Part 2.2: GASS guidance and spread expansion}

Our proposed \emph{GASS} guidance identifies the two orthonormal basis directions $\mathbf{e}_t$, and $\mathbf{u}_{\text{ind}}$ with the largest mean absolute projection scores as specified in Eq.~\ref{eq:3}.
For each point $\mathbf{e}_i$, we apply independent uniform perturbations as described in Eq.~\ref{eq:5} and Eq.~\ref{eq:6}, with:
\begin{equation}
    \delta_{i}^{\text{dep}} \sim \mathcal{U}[-r_{\text{dep}}, r_{\text{dep}}], \delta^{\text{ind}}_i \sim \mathcal{U}[-r_{\text{ind}}, \: r_{\text{ind}}].
\end{equation}
The perturbed projected coordinates thus become:
\begin{equation}
    \tilde{\alpha}_i = \alpha_i + \delta_i^{\text{dep}}\mathbf{e}_t + \delta_i^{\text{ind}}\mathbf{u}_{\text{ind}},
\end{equation}
where $\alpha_i = \Pi(\mathbf{e}_i)$.

\textit{Part 2.3: Impact on edge vectors and Gram matrix}

By Lemma~\ref{lemma:commutativity}, for any pair of edge vectors, we have:
\begin{equation}
    \tilde{\mathbf{e}}_{ij} = \tilde{\alpha}_j - \tilde{\alpha}_i = (\alpha_j - \alpha_i) + (\delta_i^{\text{dep}} - \delta_j^{\text{ind}})\mathbf{e}_t + (\delta_i^{\text{ind}} - \delta^{\text{ind}}_j)\mathbf{u}_{\text{ind}}.
\end{equation}
Let $\mathbf{A} \in \mathbb{R}^{d \times (B-1)}$ be the matrix of original edge vectors, then the perturbed edge vectors form: 
\begin{equation}
    \tilde{\mathbf{A}} = \mathbf{A} + \Delta \mathbf{A}, 
\end{equation}
where $\Delta \mathbf{A}$ encodes the rank-2 expansion perturbation structure along $\mathbf{e}_t$ and $\mathbf{u}_{\text{ind}}$ introduced by our \emph{GASS} method.
The new Gram matrix $\tilde{\mathbf{G}}$ after \emph{GASS} expansion thus become:
\begin{equation}
    \tilde{\mathbf{G}} = \tilde{\mathbf{A}}^\top\tilde{\mathbf{A}} = \mathbf{G} + \Delta \mathbf{G},
\end{equation}
where $\Delta\mathbf{G} = \mathbf{
A}^\top\Delta\mathbf{A} + (\Delta\mathbf{A})^\top\mathbf{A} + (\Delta\mathbf{A})^\top\Delta\mathbf{A}$.

\textit{Part 2.4: Positive-semidefiniteness of the expansion}

\emph{GASS} introduces rank-2 update in the directions $\mathbf{e}_t$ and $\mathbf{u}_{\text{ind}}$
For any vector $\mathbf{v}$, we have:
\begin{equation}
\label{eq:16}
    \mathbf{v}^\top(\Delta \mathbf{G})\mathbf{v} = \mathbf{v}^\top (\mathbf{
A}^\top\Delta\mathbf{A} + (\Delta\mathbf{A})^\top\mathbf{A} + (\Delta\mathbf{A})^\top\Delta\mathbf{A}) \mathbf{v}. 
\end{equation}
The dominant term $|\Delta \mathbf{A} \mathbf{v}|^2 >0$ is manifestly positive semidefinite. For the rest cross terms that involve the original edge vectors $\mathbf{A}$ and expansions $\Delta\mathbf{A}$, since we align our perturbations with the high-variance basis directions identified by the projection bases as well as empirical justifications, these cross terms preserve non-negativity in expectation, thus we have $\Delta\mathbf{G} \ge 0$.

\text{Part 2.5: Expected volume increase}
\begin{theorem}[Determinant Increase]
\label{theorom:determinant}
If $\mathbf{G}$ is positive definite and $\Delta\mathbf{G} \ge 0$, then we have:
\begin{equation}
   \text{det}(\tilde{\mathbf{G}}) = \text{det}(\mathbf{G} + \Delta \mathbf{G}) \ge \text{det}(\mathbf{G}), 
\end{equation}
with strict inequality for non-trivial perturbations.
\end{theorem}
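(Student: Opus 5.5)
The plan is to reduce the claim to a statement about a congruence-normalized matrix, where positive semidefiniteness becomes eigenvalue monotonicity. Since $\mathbf{G}$ is positive definite, it has a unique positive definite square root $\mathbf{G}^{1/2}$, and I would factor
\begin{equation*}
\mathbf{G}+\Delta\mathbf{G} = \mathbf{G}^{1/2}\bigl(\mathbf{I} + \mathbf{G}^{-1/2}\Delta\mathbf{G}\,\mathbf{G}^{-1/2}\bigr)\mathbf{G}^{1/2}.
\end{equation*}
Multiplicativity of the determinant then gives $\det(\tilde{\mathbf{G}}) = \det(\mathbf{G})\,\det(\mathbf{I}+\mathbf{M})$, where $\mathbf{M} := \mathbf{G}^{-1/2}\Delta\mathbf{G}\,\mathbf{G}^{-1/2}$. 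The whole statement is thereby reduced to showing $\det(\mathbf{I}+\mathbf{M})\ge 1$, since $\det(\mathbf{G})>0$.

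Next I transfer positivity to $\mathbf{M}$. Because $\Delta\mathbf{G}\succeq 0$ (this is the hypothesis, taken as supplied by Part 2.4), congruence preserves semidefiniteness. Concretely, for any $\mathbf{v}$ we have $\mathbf{v}^\top\mathbf{M}\mathbf{v} = (\mathbf{G}^{-1/2}\mathbf{v})^\top\Delta\mathbf{G}\,(\mathbf{G}^{-1/2}\mathbf{v})\ge 0$. Hence $\mathbf{M}$ is symmetric with eigenvalues $\mu_1,\dots,\mu_{B-1}\ge 0$, and
\begin{equation*}
\det(\mathbf{I}+\mathbf{M})=\prod_{k}(1+\mu_k)\ge 1,
\end{equation*}
which proves the non-strict inequality. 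An equivalent route would apply Weyl's monotonicity theorem, $\lambda_k(\mathbf{G}+\Delta\mathbf{G})\ge\lambda_k(\mathbf{G})>0$, and multiply the eigenvalues. I would mention this as a cross-check but use the congruence argument as the main proof.

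For the strict inequality, I would make \emph{non-trivial perturbation} precise as $\Delta\mathbf{G}\neq 0$. This is the step that needs care: $\Delta\mathbf{G}\succeq 0$ together with $\Delta\mathbf{G}\neq 0$ forces $\Delta\mathbf{G}$ to have a strictly positive eigenvalue. Because $\mathbf{G}^{-1/2}$ is invertible, $\mathbf{M}=\mathbf{G}^{-1/2}\Delta\mathbf{G}\,\mathbf{G}^{-1/2}$ is nonzero, and since it is positive semidefinite some $\mu_k>0$. That factor contributes $1+\mu_k>1$ while all other factors are at least $1$, so the product exceeds $1$ strictly.

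I expect the main obstacle to be conceptual rather than technical. The theorem is clean only under the hypothesis $\Delta\mathbf{G}\succeq 0$, and for the actual GASS update the cross terms $\mathbf{A}^\top\Delta\mathbf{A}+(\Delta\mathbf{A})^\top\mathbf{A}$ are not semidefinite pointwise. I would therefore state the theorem purely as the matrix fact above. For the downstream expected-volume claim I would additionally note that, because the $\delta$'s have zero mean and are independent of $\mathbf{A}$, $\mathbb{E}[\Delta\mathbf{G}]=\mathbb{E}[(\Delta\mathbf{A})^\top\Delta\mathbf{A}]\succeq 0$; that identity is the legitimate input to apply the theorem in expectation.

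However, the determinant is not linear in $\Delta\mathbf{G}$, so $\mathbb{E}[\Delta\mathbf{G}]\succeq 0$ by itself only gives $\det(\mathbf{G}+\mathbb{E}[\Delta\mathbf{G}])\ge\det(\mathbf{G})$, not a bound on $\mathbb{E}[\det(\tilde{\mathbf{G}})]$. Closing this gap in Part 2.5 needs extra work. One option is a second-order argument for small $r_k$. Another is to use that $\det(\mathbf{G}+\Delta\mathbf{G})$ is a polynomial in the perturbations, so its expectation can be computed termwise from the zero-mean independent $\delta$'s.
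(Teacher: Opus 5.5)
Your proof is correct, and it rests on a different key lemma from the paper's. The paper gives no real proof. It invokes eigenvalue monotonicity under positive semidefinite perturbations (Weyl: $\lambda_k(\mathbf{G}+\Delta\mathbf{G})\ge\lambda_k(\mathbf{G})$), multiplies the eigenvalues, defers the details to textbooks, and never says what ``non-trivial'' means. You instead normalize by congruence and obtain the exact identity $\det(\tilde{\mathbf{G}})=\det(\mathbf{G})\prod_k(1+\mu_k)$ with all $\mu_k\ge 0$. That route needs only the spectral theorem and multiplicativity of $\det$. It also makes the strict case immediate once you pin ``non-trivial'' down as $\Delta\mathbf{G}\neq 0$. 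The Weyl route gives more information: each eigenvalue weakly increases, so any spectral quantity nondecreasing in each eigenvalue increases too. However, it rests on Courant--Fischer, and for strictness it needs a step the paper omits, e.g.\ $\operatorname{tr}\Delta\mathbf{G}>0$ forces some eigenvalue to increase strictly.

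Your caveat about how the theorem is used downstream is also right. Part 2.4 asserts $\Delta\mathbf{G}\succeq 0$ from an in-expectation heuristic that fails pointwise. The final display of the paper's proof also uses $\mathbb{E}[\sqrt{X}]\ge\sqrt{\mathbb{E}[X]}$, which is the reverse of Jensen. So a termwise bound on $\mathbb{E}[\det\tilde{\mathbf{G}}]$ alone would not close the gap. A cleaner repair, for the targets before re-normalization as in the paper's proof, is to write $\sqrt{\det\tilde{\mathbf{G}}}$ as the norm of the wedge product of the perturbed edge vectors. The perturbation has rank at most two, so terms of order three and higher in that wedge product vanish identically. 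The linear and quadratic terms average to zero because the $\delta$'s are zero-mean, with the dependent and independent families independent of each other. Hence the expectation of the wedge product is the unperturbed one, and convexity of the norm gives $\mathbb{E}[V(\tilde{\mathcal{P}})]\ge V(\mathcal{P})$ directly.
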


% This above theorem can be proved through standard background in linear algebra, we thus defer interested readers to relevant textbooks~\cite{horn2012matrix,strang2022introduction}. 
% In our case, for uniform perturbations with zero means, $\mathbb{E}[\Delta A] = 0$ but $\mathbb{E}[(\Delta A)^T (\Delta A)]>0$.

If $\mathbf{G}$ is positive definite and $\Delta \mathbf{G} \ge 0$, then $\det(\mathbf{G} + \Delta \mathbf{G}) \geq \det(\mathbf{G})$. Intuitively, adding a positive semidefinite perturbation increases (or preserves) all eigenvalues, hence increasing the determinant. A formal proof via eigenvalue perturbation theory is standard; we refer interested readers to linear algebra books~\cite{horn2012matrix,strang2022introduction}.

We can thus derive the hypervolume of the expanded point set $\tilde{\mathcal{P}}$ after \emph{GASS} to be:
\begin{equation}
    V(\tilde{\mathcal{P}}) = \frac{\sqrt{\text{det}(\tilde{\mathbf{G}})}}{(B-1)!} .
\end{equation}

Based on Theorem~\ref{theorom:determinant}, we can therefore arrive at:
\begin{equation}
    \mathbb{E}[V(\tilde{\mathcal{P}})] \ge \frac{\sqrt{\mathbb{E}[\text{det}(\tilde{\mathbf{G}})]}}{(B-1)!} > \frac{\sqrt{\text{det}({\mathbf{G})}}}{(B-1)!} = V(\mathcal{P}).
\end{equation}

\end{proof}

\section{Overall Algorithm for \emph{GASS}}
\label{appsec:spherical_design}

In the main paper, we present the algorithms for constructing the 
spherical basis in Algo.~\ref{alg:basis}, and optimizing image 
predictions via CLIP-guided gradient updates following \emph{GASS} 
expansion in Algo.~\ref{alg:opt}. We provide the complete end-to-end 
algorithm summarizing our \emph{GASS} method in Algo.~\ref{alg:gass}.

\begin{algorithm}[H]
   \caption{\emph{GASS} for Diversity Enhancement in T2I
   }
   \label{alg:gass}
\begin{algorithmic}
   \STATE {\bfseries Input:} Text prompt $c$, pre-trained T2I models $p$, CLIP text encoder $\mathcal{E}_T$, CLIP image encoder $\mathcal{E}_I$, number of candidate directions $N$ for dominant residual base construction, expansion ranges $r_{\text{dep}}$ and $r_{\text{ind}}$, step size $\eta$ for optimization, \emph{GASS} sampling step range $\mathcal{T}$.

   \STATE {\bfseries Output:} Diverse image batch $\mathcal{X}' = \{\mathbf{x}'_i\}_{i=1}^B$
   
   \vspace{0.5em}

    \STATE \textbf{Generation Loop}
   
   \STATE Encode text prompt: $\mathbf{e}_t \leftarrow \mathcal{E}_T(c)$ 
   
   \STATE Sample random Gaussian initial latent codes $\{\mathbf{z}_{i,T}\}_{i=1}^B$ for generation

   % \STATE $t \leftarrow T$

    \FOR{$t \in$ reverse($\{0, 1, \ldots, T\}$)}
    \IF{$t \notin \mathcal{T}$}
        \STATE {\bfseries Standard Generation Step:}
        \STATE Predict clean latent: $\{\hat{\mathbf{z}}_{i,0|t} \}_{i=1}^B\leftarrow p_{\theta}(\mathbf{z}_{i,t}, t, \mathbf{e}_t)$ \quad \text{for}\: $i=1 \dots B$
        \STATE Estimate score and denoise: $\{\mathbf{z}_{i,t-1}\}_{i=1}^B \leftarrow p_{\theta}(\mathbf{z}_{i,t}, t, \hat{\mathbf{z}}_{i,0|t})$ \quad \text{for}\: $i=1 \dots B$

    \ELSE
    \STATE {\bfseries GASS Optimization Step:}
    \STATE \textit{Stage 1: Spherical Decomposition} \quad (See Sec.~\ref{subsec:spherical_disen} for details)
    \STATE Predict clean latent: $\{\hat{\mathbf{z}}_{i,0|t} \}_{i=1}^B\leftarrow p_{\theta}(\mathbf{z}_{i,t}, t, \mathbf{e}_t)$ \quad \text{for}\: $i=1 \dots B$
    \STATE Decode predicted latent: $\{\hat{\mathbf{x}}_{i,0|t} \}_{i=1}^B\leftarrow \text{VAE}_{\text{dec}}(\hat{\mathbf{z}}_{i,0|t})$ \quad \text{for}\: $i=1 \dots B$
    \STATE Encode images to embedding: $\{\mathbf{e}_{i,0|t}\}_{i=1}^B \leftarrow \mathcal{E}_I(\hat{\mathbf{x}}_{i,0|t})$ \quad \text{for}\: $i=1 \dots B$
    \STATE Construct orthonormal basis $\{\mathbf{u}_k\}_{k=1}^d$ with $\mathbf{u}_1 = \mathbf{e}_t$ via Gram-Schmidt orthogonalization

    \STATE \textit{Stage 2: Residual Basis Identification}
    \STATE Generate $N$ candidate direction vectors $\{\mathbf{r}_k\}_{k=1}^N$ orthogonal to $\mathbf{e}_t$
    \FOR{$k = 1$ \text{to} $N$}
    \STATE Projection magnitude: $E_k \leftarrow |\mathbf{e}_{i,0|t}^\top \mathbf{r}_k|$ 
    \ENDFOR
    \STATE $k^* \leftarrow \arg\max_k E_k$
    \STATE $\mathbf{u}_{\text{ind}} \leftarrow \mathbf{r}_{k^*}$
    
    \STATE \textit{Stage 3: GASS Expansion} \quad (See Sec.~\ref{subsec:pertubation} for details)
    \STATE Compute the residual: $\{\mathbf{r}_i\}_{i=1}^B \leftarrow \mathbf{e}_{i,0|t} - (\mathbf{e}_{i,0|t}^\top \mathbf{e}_t) \mathbf{e}_{t} - (\mathbf{e}_{i,0|t}^\top \mathbf{u}_{\text{ind}}) \mathbf{u}_{\text{ind}} $ \quad \text{for}\: $i=1 \dots B$
    \STATE Sample perturbations: $\delta^{\text{dep}}_i \sim \text{Uniform}[-r_{\text{dep}}, r_{\text{dep}}]$, $\delta^{\text{ind}}_i \sim \text{Uniform}[-r_{\text{ind}}, r_{\text{ind}}]$ \quad \text{for}\: $i=1 \dots B$
    \STATE Get expanded image encoding: $\{\tilde{\mathbf{e}}_{i,0|t}\}_{i=1}^B \leftarrow (\mathbf{e}_{i,0|t}^\top \mathbf{e}_t + \delta_{i}^{\text{dep}})\mathbf{e}_{t} + (\mathbf{e}_{i,0|t}^\top \mathbf{u}_{\text{ind}} + \delta_{i}^{\text{ind}})\mathbf{u}_{\text{ind}} + \mathbf{r}_i,$ \quad \text{for}\: $i=1 \dots B$
    \STATE Re-normalize: $\{\tilde{\mathbf{e}}_{i,0|t} \}_{i=1}^B\leftarrow \tilde{\mathbf{e}}_i / \|\tilde{\mathbf{e}}_i\|$ \quad \text{for}\: $i=1 \dots B$

    \STATE \textit{Stage 4: GASS Optimization} \quad (See Sec.~\ref{subsec:optimization} for details)
    \STATE Compute SPP alignment loss: $\mathcal{L}_{SPP} \leftarrow \sum_{i=1}^B (1 - \mathbf{e}_i^\top \tilde{\mathbf{e}}_i)$
  \STATE Compute gradient: $\mathbf{g}_i \leftarrow \nabla_{\hat{\mathbf{x}}_{i,0|t}} \mathcal{L}_{\text{SPP}}$ \quad \text{for}\: $i=1 \dots B$
\STATE Update predicted latent: $\{\hat{\mathbf{x}}_{i,0|t}^*\}_{i=1}^B \leftarrow \hat{\mathbf{x}}_{i,0|t} - \eta \cdot \mathbf{g}_i$   \quad \text{for}\: $i=1 \dots B$
   \STATE \textbf{Estimate score and denoise:}
   \STATE $
   \{\mathbf{\hat{z}}_{i,0|t}^*\}_{i=1}^B \leftarrow \text{VAE}_{\text{enc}}(\hat{\mathbf{x}}_{i,0|t}^*)$ \quad \text{for}\: $i=1 \dots B$
    \STATE $\{\mathbf{z}_{i,t-1}\}_{i=1}^B
    \leftarrow p_{\theta}(\mathbf{z}_{i,t}, t, \hat{\mathbf{z}}^*_{i,0|t})$   \quad \text{for}\: $i=1 \dots B$ 
    \ENDIF
    \ENDFOR
       \STATE Decode final latent: $\mathbf{x}'_i \leftarrow \text{VAE}_{\text{dec}}(\mathbf{z}_{i,0})$ \text{for}\: $i=1 \dots B$ 
     \STATE \textbf{return} Diverse image batch $\mathcal{X}' = \{\mathbf{x}'_i\}_{i=1}^B$
\end{algorithmic}
\end{algorithm}

% \subsection{Overall Algorithm for \emph{GASS} }

% In the main paper, we present the algorithms for constructing the 
% spherical basis in Algo.~\ref{alg:basis}, and optimizing image 
% predictions via CLIP-guided gradient updates following \emph{GASS} 
% expansion in Algo.~\ref{alg:opt}. We provide the complete end-to-end 
% algorithm summarizing our \emph{GASS} method in Algo.~\ref{alg:gass}.

% \subsection{Alternative Basis Strategies}
% \label{app_subsec:other_basis}

% In addition to the method we describe in the main paper, we also tested other alternative basis construction strategies during our experiments.

\section{Additional Experimental Results}
\label{appsec:exp_results}

\subsection{Additional Implementation Details}

For classifier-free guidance (CFG)~\cite{ho2022cfg}, we adopt the recommended hyperparameter values from the official implementations of each T2I 
base model. For SD3-M, we set the guidance strength to 5.5 and 7.0 on ImageNet and DrawBench, respectively. For SD2.1, we use a guidance strength of 8.0 on both benchmarks, as recommended values are typically higher than those of SD3-M. For the VS computation~\cite{friedmanvendi2023}, we report the VS calculated based on the similarity matrix extracted from the Inception model v3~\cite{szegedy2016rethinking}.

% \yz{To Kaleb: Could you please write some implementation details for the following baseline methods you have implemented? Like how we choose the hyperparameters of those methods, based on recommended values from their original papers? Also use one or two sentence describing how the method works in general, see the examples below.}

\textbf{Particle Guidance (PG)}.
PG~\cite{pg2024} proposes to enhance sample diversity by perturbing the standard sampling process with a potential correction term that converts independent samples into non-i.i.d. samples. For our experiments, we use the publicly available code implementation adapted for both SD3-M and SD2.1, which includes the recommended hyperparameters.

\textbf{Interval Guidance (IG)}.
IG~\cite{kynkaanniemi2024ig} demonstrates that classifier-free guidance can be counterproductive in early denoising steps and redundant in later steps. They propose restricting CFG application to an intermediate interval defined by lower and upper noise level bounds, $\sigma_{\text{lo}}$ and $\sigma_{\text{hi}}$. Following their approach and adapting to models not tested in their work (SD3-M and SD2.1), we perform a grid search over these bounds and select the hyperparameters that yield the best overall performance.
% IG~\cite{kynkaanniemi2024ig} shows in their work that guidance is harmful in the beginning steps of the denoising process and unnecessary toward the end. Thus they propose limiting CFG to only be applied during an interval in the middle. Similarly to the paper (and because they did not test SD3 or SD2.1), we wan a grid search over $\sigma_{lo}$ and $\sigma_{hi}$, which is the lower and upper bound of the interval. We chose the hyperparameters that lead to the best collection of scores.

% , thus resulting in $\sigma \in (0.2, 0.8]$ for SD3-M and  $\sigma \in (0.1, 0.9]$ For SD2.1. Classic CFG is reovered by $\sigma \in (0.0, 1.0]$ in our implementation.

\textbf{CADS.} CADs~\cite{sadatcads} proposes to anneal the conditioning signal by adding monotonically decreasing gaussian noise to the conditioning vector over a fixed interval during the denoising process. CADs has 4 hyperparameters: $\tau_{1}$, $\tau_{2}$, noise scale \textit{s} and a mixing factor $\psi$. For CADs, noise is injected into the conditioning embedding using a linear schedule between $\tau_{1}$ and $\tau_{2}$. We fix $\psi$ to 1.0 and fix $\tau_{2}$ to be more than 1. We further run a grid search over values for $\tau_{1}$, $\tau_{2}$, and \textit{s}, we then select the set that lead to the best performance.

\textbf{SPELL.} SPELL~\cite{kirchhof2025spell} introduces a repellency term that penalizes batch samples whose pairwise distances fall below a pre-defined threshold $r$. Since the original paper does not provide publicly accessible code, we re-implement SPELL based on the provided pseudo-code. Following the original paper, we set the overcompensation coefficient $\lambda = 1.6$ and perform a grid search  to determine the optimal radius threshold $r$. We set $r = 250$ for SD2.1 and $r = 350$ for SD3-M.

We summarize the above implementation details and hyperparameter choices from our baseline methods in Tab.~\ref{tab:hyperparameters}.

\begin{table}[ht]
\centering
\caption{Hyperparameter settings for different baseline methods across SD2.1 and SD3-M.}
\label{tab:hyperparameters}
\begin{tabular}{@{}lll@{}}
\toprule
\textbf{Backbone} & \textbf{Method} & \textbf{Hyperparameters} \\ \midrule
\multirow{4}{*}{SD3-M} 
    & IG-NeurIPS'24 & $\sigma \in (0.2, 0.8]$ \\
    & PG-ICLR'24 & power coeff = 30.0 \\
    & CADS-ICLR'24 & $\tau_{1}= 0.9, \tau_{2} = 1.20, s = 0.10, \psi = 1.0$ \\
    & SPELL-ICML'25 & $\lambda = 1.6, r=250$ \\ \midrule
\multirow{4}{*}{SD2.1} 

    & IG-NeurIPS'24 & $\sigma \in (0.1, 0.9]$ \\
    & PG-ICLR'24 & power coeff = 30.0 \\
    & CADS-ICLR'24 & $\tau_{1}= 0.9, \tau_{2} = 1.30, s = 0.15, \psi = 1.0$ \\
    & SPELL-ICML'25 & $\lambda = 1.6, r=350$ \\ \bottomrule
\end{tabular}
\end{table}

\subsection{More Qualitative Results}
\label{appsubsec:qualitative}

We also include more \textbf{non-cherry picked} qualitative results in Fig.~\ref{fig:app_qualitative1} and Fig.~\ref{fig:app_qualitative2}.
Consistent with the observations described in the main paper, our proposed \emph{GASS} introduces diversity across multiple dimensions such as semantic-aligned attributes (e.g., viewing angle, object count) as 
specified in the prompt, as well as unspecified attributes (e.g., background variations).

\begin{figure}[th]
    \centering
    \includegraphics[width=1.0\linewidth]{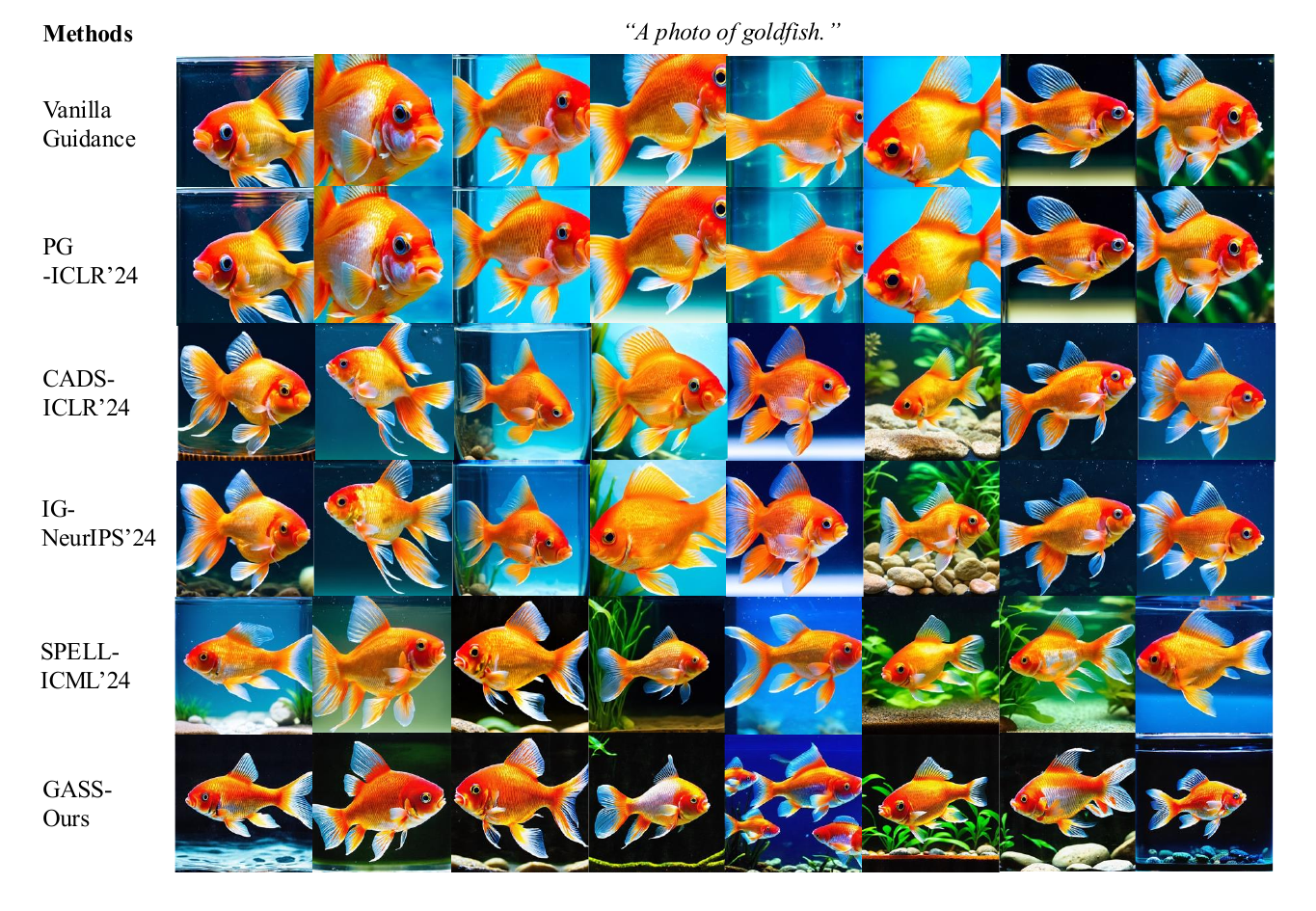}
    \caption{\textbf{Additional non-cherry-picked qualitative results comparison with other methods on ImageNet with the example class \emph{"goldfish"}.}}
    \label{fig:app_qualitative1}
\end{figure}

\begin{figure}[t]
    \centering
    \includegraphics[width=1.0\linewidth]{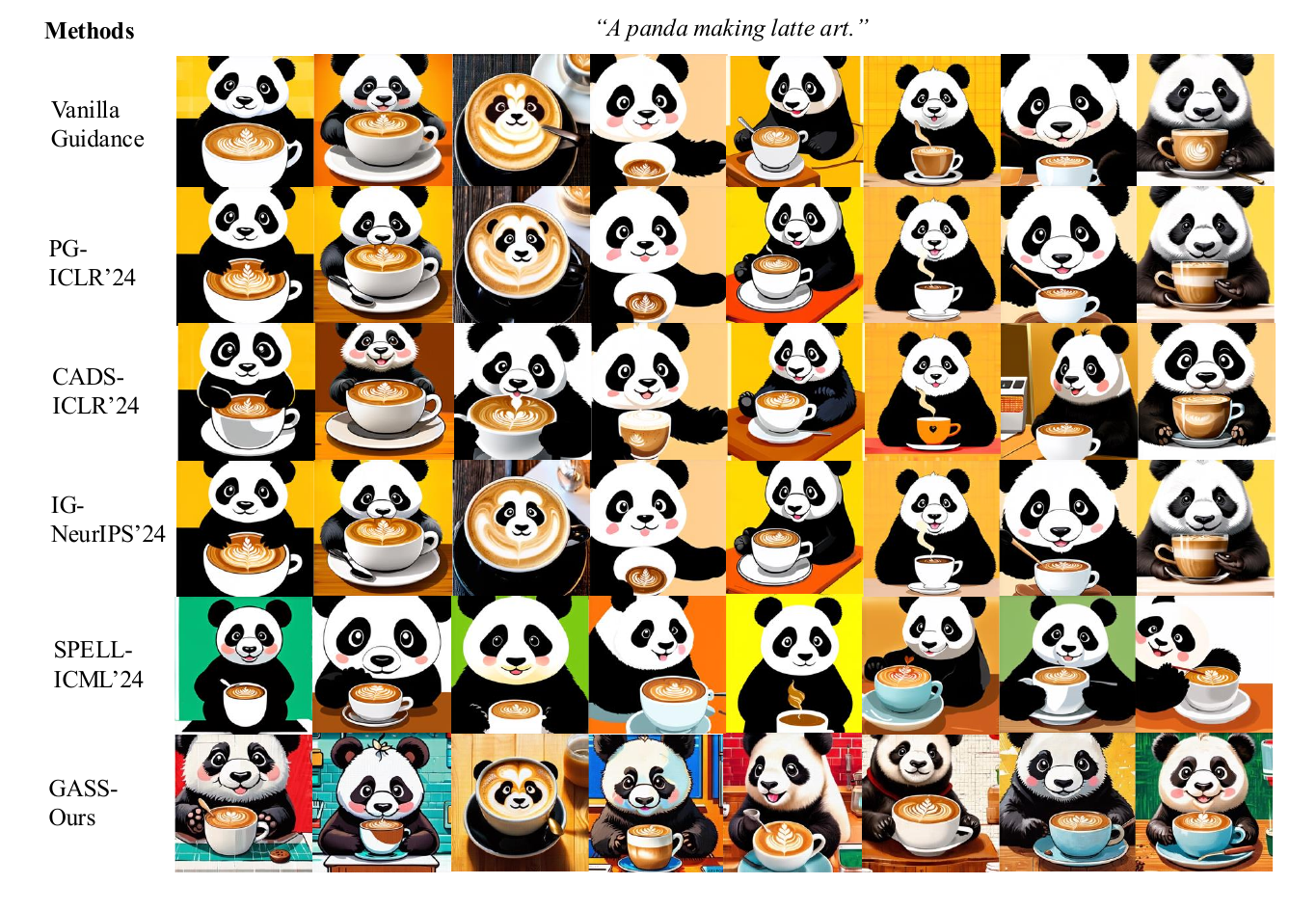}
    \caption{\textbf{Additional non-cherry-picked qualitative results comparison with other methods on Drawbench.}}
    \label{fig:app_qualitative2}
\end{figure}

\section{Further Discussions}
\label{appsec:discussions}

\subsection{Limitations}
While our proposed \emph{GASS} method effectively enhances T2I diversity and explores the residual space beyond the given prompts, similar to other sampling-based post-training guidance methods, it incurs extra inference time compared to the original sampling process. Specifically, we note that the major additional computational overhead comes from its current reliance on the CLIP space, which requires extra pixel encoding, gradient-based optimization, and pixel decoding for spread expansion. As we note in the main paper, the current inference time, under 20 \emph{GASS} applied sampling steps, is around 3.68 seconds per batch, versus 1.71 seconds in the original setting.

While \emph{GASS} already offers the possibility to reduce this overhead by applying expansion into fewer steps, one potential future direction to mitigate this is through orthogonal acceleration techniques, such as building a dedicated embedding space directly into the generative model, thus eliminating the need for external CLIP inference. Future work could explore such integrated approaches to achieve geometric diversity guidance with minimal computational overhead.

\subsection{Failure Cases Analysis}
\label{subsec:failure_case}

While our proposed \emph{GASS} effectively enhances image diversity in most cases, as demonstrated by extensive non-cherry-picked qualitative results in the main paper and appendix, we identify several failure cases where the method produces suboptimal outputs,  as illustrated in Fig.~\ref{fig:failure_case}.

Specifically, we observe that a small fraction of generated images remain similar to baseline samples under vanilla CFG. We attribute this to the fact that our expansion perturbations $\delta_i^{\text{dep}}$ and $\delta_i^{\text{ind}}$ are uniformly sampled from zero-mean distributions. While the probability of both perturbations being simultaneously close to zero is very low, such cases do occur, which result in rather trivial modifications after \emph{GASS}. Importantly, these isolated instances \textbf{do not impact the overall batch-level diversity metrics}, as diversity is measured across the entire batch rather than individual samples.

\begin{figure}[H]
    \centering
    \includegraphics[width=1.0\linewidth]{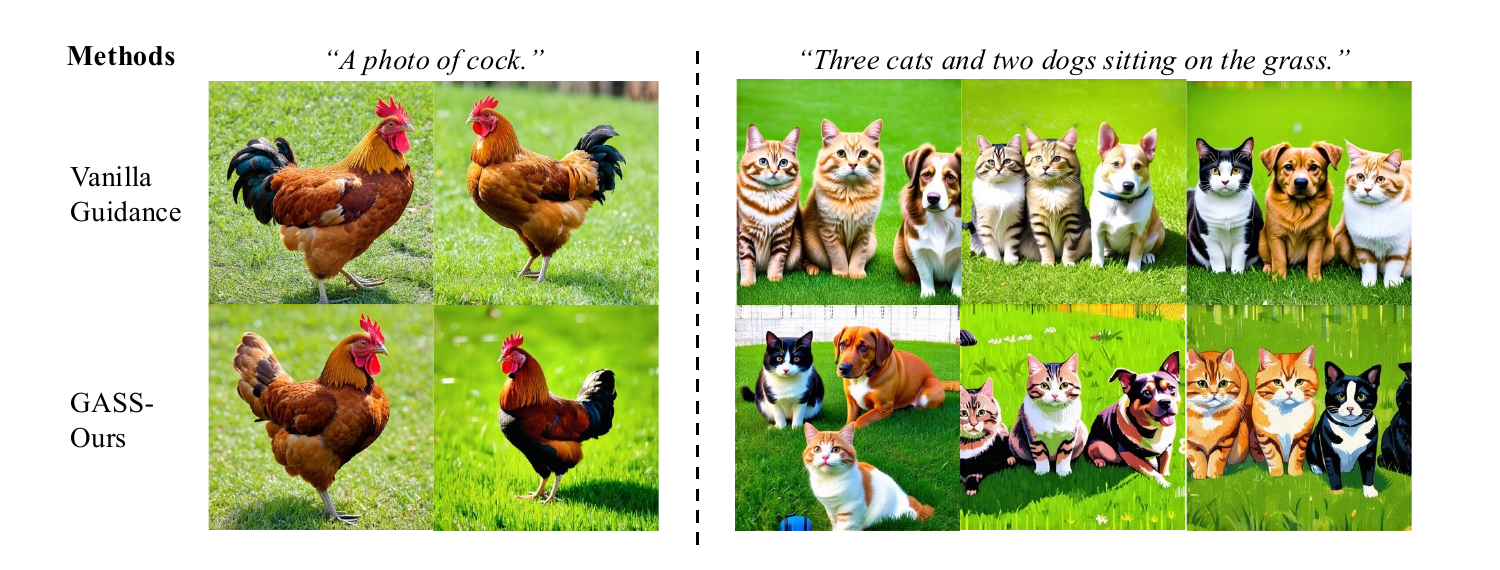}
    \caption{\textbf{Failure case analysis.} \emph{(Left):} A small number of images still resemble the original ones after GASS. \emph{(Right:)} When the base models can’t generate accurate counts specified by the prompt, despite GASS introducing extra diversity, it is less likely to correct these inconsistencies by itself.}
    \label{fig:failure_case}
\end{figure}

In another failure scenario, we observe cases where the base model struggles to accurately follow complex prompts (e.g., ``Three cats and two dogs sitting on the grass''). While \emph{GASS} introduces diversity in secondary attributes such as layout and style, it cannot independently correct these semantic misalignments. This limitation stems from our sampling-based approach relying on frozen pretrained 
models, and thus, we are fundamentally bounded by the base model's capabilities for downstream tasks requiring specific understanding (e.g., numeracy reasoning). Improving performance in such cases would 
require enhancing the base model itself, which is beyond the scope of post-hoc sampling interventions.

% Several directions remain open. First, extending the proposed geometric decomposition beyond prompts (e.g., to multi-condition inputs such as layout or reference images) may enable finer control over which factors of variation are amplified. It could also be worth exploring a more direct connection between geometric spread and downstream utility, through aspects such as incorporating additional perceptual constraints or human-preferred diversity calibrations.

% \section{You \emph{can} have an appendix here.}

% You can have as much text here as you want. The main body must be at most $8$
% pages long. For the final version, one more page can be added. If you want, you
% can use an appendix like this one.

% The $\mathtt{\backslash onecolumn}$ command above can be kept in place if you
% prefer a one-column appendix, or can be removed if you prefer a two-column
% appendix.  Apart from this possible change, the style (font size, spacing,
% margins, page numbering, etc.) should be kept the same as the main body.
%%%%%%%%%%%%%%%%%%%%%%%%%%%%%%%%%%%%%%%%%%%%%%%%%%%%%%%%%%%%%%%%%%%%%%%%%%%%%%%
%%%%%%%%%%%%%%%%%%%%%%%%%%%%%%%%%%%%%%%%%%%%%%%%%%%%%%%%%%%%%%%%%%%%%%%%%%%%%%%

\end{document}